  \providecommand\BibTeX{{%
    \normalfont B\kern-0.5em{\scshape i\kern-0.25em b}\kern-0.8em\TeX}}}
\newtheorem{example}{Example}
\newtheorem{theorem}{Theorem}
\newtheorem{lemma}{Lemma}
\newcommand{\argmax}{\operatornamewithlimits{argmax}}
\newcommand{\pr}{\mathbb{P}}
\newcommand{\vtheta}{{\theta}} %by Catuscia: I redefined this command to be consistent with the notation of Daniele
\begin{document}
\pagenumbering{gobble}
%%
%% The "title" command has an optional parameter,
%% allowing the author to define a "short title" to be used in page headers.
\title{BaBE: Enhancing Fairness via Estimation of Explaining Variables}

%%
%% The "author" command and its associated commands are used to define
%% the authors and their affiliations.
%% Of note is the shared affiliation of the first two authors, and the
%% "authornote" and "authornotemark" commands
%% used to denote shared contribution to the research.
% \author{Ruta Binkyte}
% \authornote{Both authors contributed equally to this research.}
% \email{trovato@corporation.com}
% \orcid{1234-5678-9012}
% \author{G.K.M. Tobin}
% \authornotemark[1]
% \email{webmaster@marysville-ohio.com}
% \affiliation{%
%   \institution{Institute for Clarity in Documentation}
%   \streetaddress{P.O. Box 1212}
%   \city{Dublin}
%   \state{Ohio}
%   \country{USA}
%   \postcode{43017-6221}
% }

\author{Ruta Binkyte}
\affiliation{%
  \institution{CISPA Helmholtz Center for Information Security}
  \streetaddress{}
  \city{}
  \country{Germany}}
\email{ruta.binkyte-sadauskiene@cispa.de}

\author{Daniele Gorla}
\affiliation{%
  \institution{Università di Roma ``La Sapienza''}
  \city{}
  \country{Italy}
}

\author{Catuscia Palamidessi}
\affiliation{%
  \institution{Inria Saclay and École Polytechnique}
  \city{}
  \country{France}
}

% \author{Aparna Patel}
% \affiliation{%
%  \institution{Rajiv Gandhi University}
%  \streetaddress{Rono-Hills}
%  \city{Doimukh}
%  \state{Arunachal Pradesh}
%  \country{India}}

% \author{Huifen Chan}
% \affiliation{%
%   \institution{Tsinghua University}
%   \streetaddress{30 Shuangqing Rd}
%   \city{Haidian Qu}
%   \state{Beijing Shi}
%   \country{China}}

% \author{Charles Palmer}
% \affiliation{%
%   \institution{Palmer Research Laboratories}
%   \streetaddress{8600 Datapoint Drive}
%   \city{San Antonio}
%   \state{Texas}
%   \country{USA}
%   \postcode{78229}}
% \email{cpalmer@prl.com}

% \author{John Smith}
% \affiliation{%
%   \institution{The Th{\o}rv{\"a}ld Group}
%   \streetaddress{1 Th{\o}rv{\"a}ld Circle}
%   \city{Hekla}
%   \country{Iceland}}
% \email{jsmith@affiliation.org}

% \author{Julius P. Kumquat}
% \affiliation{%
%   \institution{The Kumquat Consortium}
%   \city{New York}
%   \country{USA}}
% \email{jpkumquat@consortium.net}

%%
%% By default, the full list of authors will be used in the page
%% headers. Often, this list is too long, and will overlap
%% other information printed in the page headers. This command allows
%% the author to define a more concise list
%% of authors' names for this purpose.
\renewcommand{\shortauthors}{Binkyte, et al.}

%%
%% The abstract is a short summary of the work to be presented in the
%% article.
\begin{abstract}
  We consider the problem of unfair discrimination between two groups and propose a pre-processing method to achieve fairness. Corrective methods like statistical parity
usually lead to bad accuracy and do not really achieve fairness in situations where there is a correlation between the  sensitive attribute $S$ and the legitimate attribute $E$ (explanatory variable) that should determine the decision.
%, like, for example, in the healthcare domain. 
To overcome these drawbacks, other notions of fairness have been proposed, in particular, conditional statistical parity and equal opportunity. 
However, $E$ is often not directly observable in the data. We may observe some other variable $Z$ representing $E$, but the problem is that $Z$ may also be affected by $S$, hence $Z$ itself can be biased. To deal with this problem, we propose BaBE (Bayesian Bias Elimination), an approach based on a combination of Bayes inference and the Expectation-Maximization method, to estimate the most likely value of $E$ for a given $Z$ for each group. The decision can then be based directly on the estimated $E$. We show, by experiments on synthetic and real data sets, that our approach provides a good level of fairness as well as high accuracy.
\end{abstract}

%%
%% The code below is generated by the tool at http://dl.acm.org/ccs.cfm.
%% Please copy and paste the code instead of the example below.
%%
% \begin{CCSXML}
% <ccs2012>
%  <concept>
%   <concept_id>00000000.0000000.0000000</concept_id>
%   <concept_desc>Do Not Use This Code, Generate the Correct Terms for Your Paper</concept_desc>
%   <concept_significance>500</concept_significance>
%  </concept>
%  <concept>
%   <concept_id>00000000.00000000.00000000</concept_id>
%   <concept_desc>Do Not Use This Code, Generate the Correct Terms for Your Paper</concept_desc>
%   <concept_significance>300</concept_significance>
%  </concept>
%  <concept>
%   <concept_id>00000000.00000000.00000000</concept_id>
%   <concept_desc>Do Not Use This Code, Generate the Correct Terms for Your Paper</concept_desc>
%   <concept_significance>100</concept_significance>
%  </concept>
%  <concept>
%   <concept_id>00000000.00000000.00000000</concept_id>
%   <concept_desc>Do Not Use This Code, Generate the Correct Terms for Your Paper</concept_desc>
%   <concept_significance>100</concept_significance>
%  </concept>
% </ccs2012>
% \end{CCSXML}

% \ccsdesc[500]{Do Not Use This Code~Generate the Correct Terms for Your Paper}
% \ccsdesc[300]{Do Not Use This Code~Generate the Correct Terms for Your Paper}
% \ccsdesc{Do Not Use This Code~Generate the Correct Terms for Your Paper}
% \ccsdesc[100]{Do Not Use This Code~Generate the Correct Terms for Your Paper}

%%
%% Keywords. The author(s) should pick words that accurately describe
%% the work being presented. Separate the keywords with commas.
\keywords{Fairness, Explainability}

\acmYear{2024}\copyrightyear{2024}
\setcopyright{acmlicensed}
\acmConference[ACM FAccT '24]{ACM Conference on Fairness, Accountability, and Transparency}{June 3--6, 2024}{Rio de Janeiro, Brazil}
\acmBooktitle{ACM Conference on Fairness, Accountability, and Transparency (ACM FAccT '24), June 3--6, 2024, Rio de Janeiro, Brazil}
\acmDOI{10.1145/3630106.3659016}
\acmISBN{979-8-4007-0450-5/24/06}

% \received{20 February 2007}
% \received[revised]{12 March 2009}
% \received[accepted]{5 June 2009}

%%
%% This command processes the author and affiliation and title
%% information and builds the first part of the formatted document.
\maketitle

\section{Introduction}
% \noindent An increasing number of decisions regarding the daily lives of human beings are relying on machine learning (ML) predictions, and it is, therefore, crucial to ensure that they are not only accurate but also objective and fair.

One of the first group of fairness notions proposed in literature was \emph{statistical parity} (SP) \cite{dwork2012fairness}, which enforces the probability of a positive prediction to be equal across different groups. Let the prediction and the group be represented, respectively, by the random variables $\hat{Y}$ and $S$, both of which are assumed to be binary for simplicity, and let $\hat{Y}=1$ stand for the positive prediction. Then SP is formally described by $\pr[\hat{Y}=1|S=1] \; = \; \pr[\hat{Y}=1|S=0]$, where $\pr[\cdot|\cdot]$ represents conditional probability.

However, SP has been criticized for causing loss of accuracy and for ignoring circumstances that could justify  disparity. A more refined notion is \emph{conditional statistical parity} (CSP)~\cite{kamiran_quantifying_2013}, which allows some disparity as long as it is legitimated by explaining factors. For example, a hiring decision positively biased towards Group $1$ could be justified if Group $1$ has a higher education level than Group $0$ in average. CSP is formally defined by $\pr[\hat{Y}=1|S=1,E=e]\; = \; \pr[\hat{Y}=1|S=0,E=e]$, for all $e$, where $E$ is a random variable representing the ensemble of explaining features.

The most common pre-processing approach  to achieve CSP (or an approximation of it) consists in editing the  label $Y$ (decision) in the training data,  according to some heuristic, so to ensure that the number of samples with $Y=1$, $S=1$, and $E=e$ are approximately the same number as those with  $Y=1$, $S=0$, and $E=e$. 
One problem, however, is that often $E$ is not directly observable in the data. Usually, we can observe some other variable $Z$ that is representative of $E$, but the problem is that $Z$ may be also influenced by the sensitive attribute $S$, hence $Z$ itself can be biased. We illustrate this scenario with the following examples.

%%%%
%%%%
\begin{figure*}[t]
 \centering
  \begin{subfigure}[t]{0.45\textwidth}
 \centering
  \includegraphics[height=3cm]{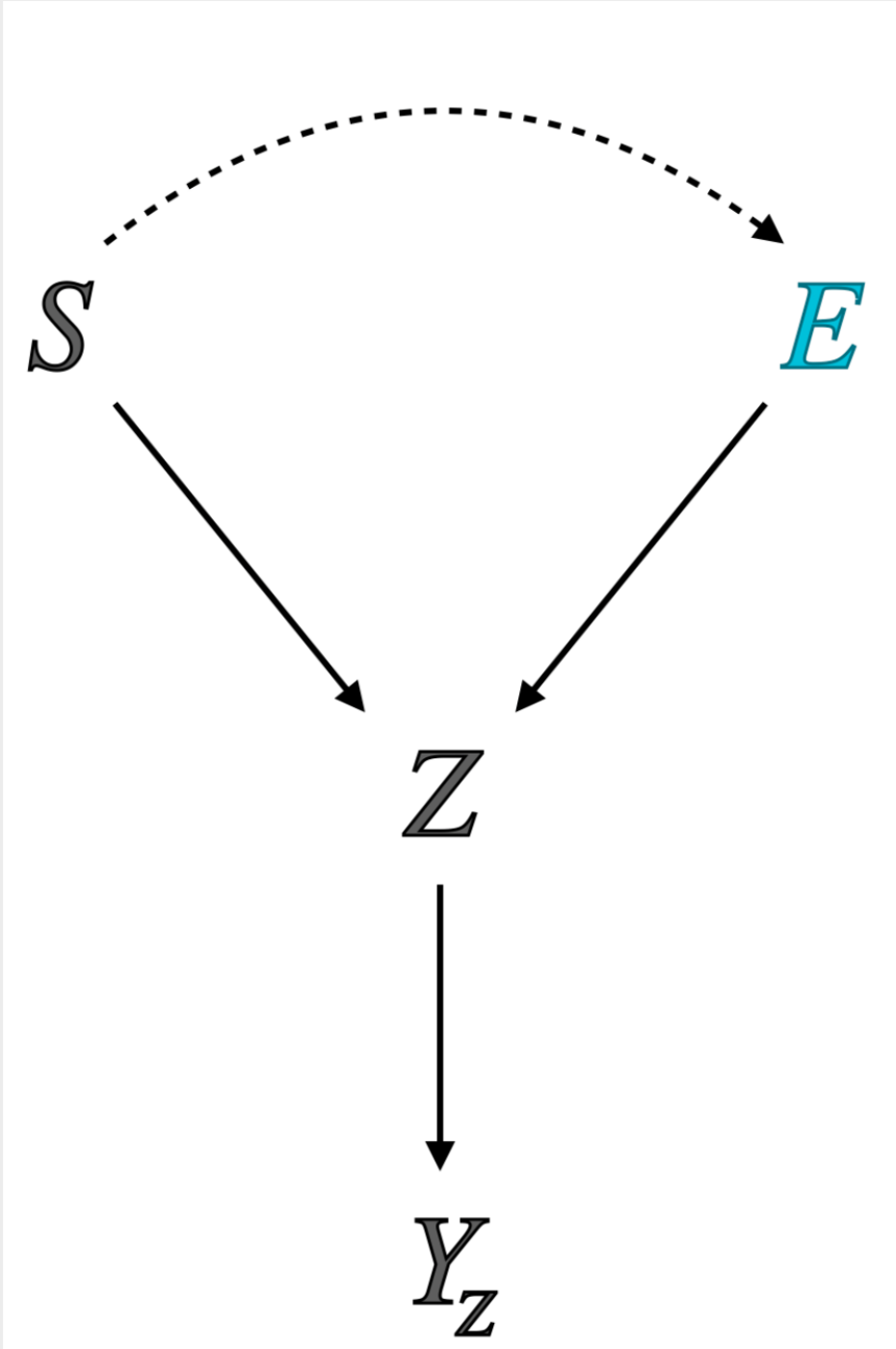}
   % \caption{Causal relation between the variables $S$, $E$,  $Z$, and the decision $Y_{Z}$ based on $Z$. }   
    % \label{subfig:models-a}
  \end{subfigure}
  %\hfill
  \begin{subfigure}[t]{0.45\textwidth}
  \centering
 \includegraphics[height=3cm]{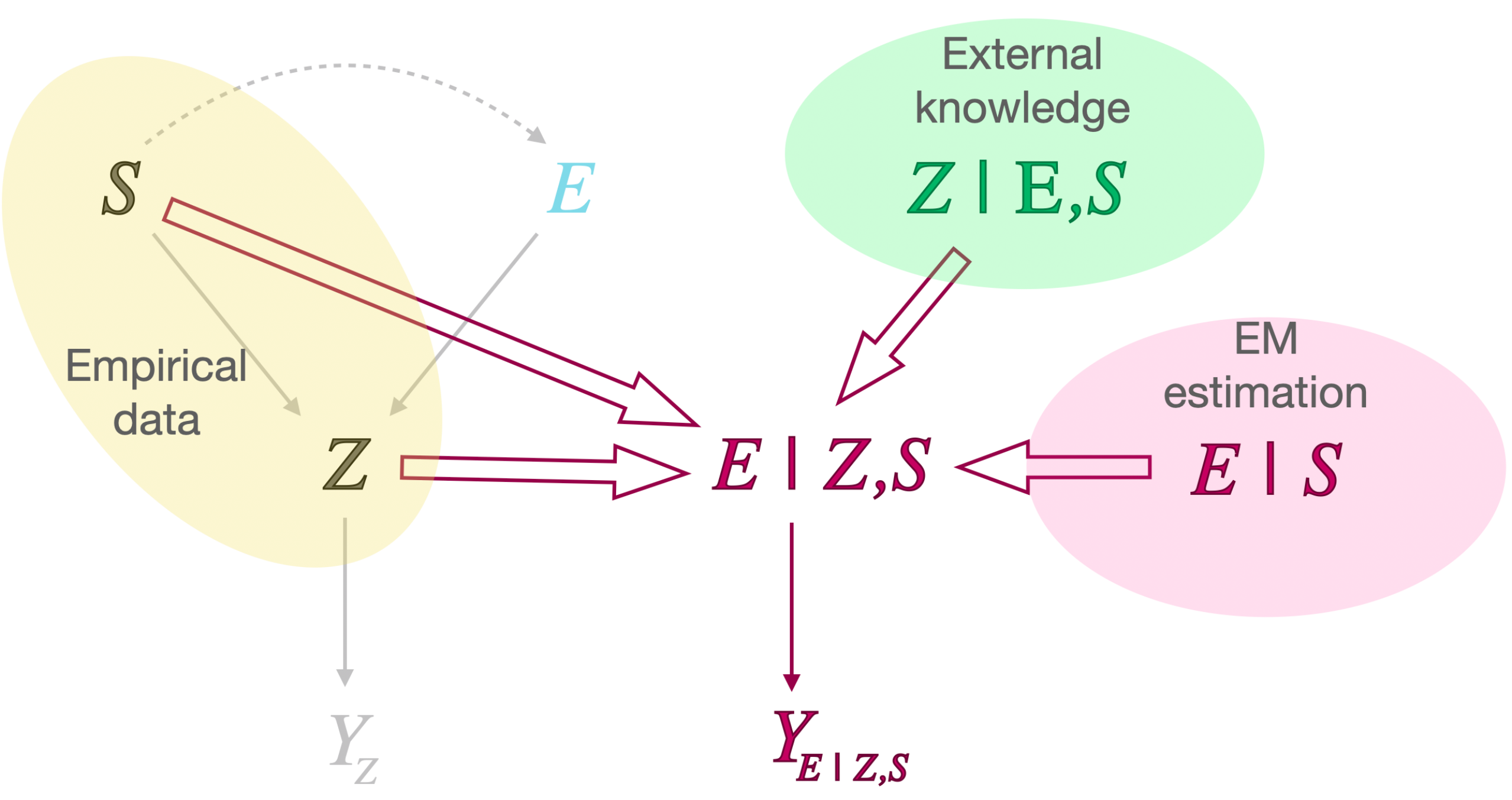}   
  % \caption{Derivation of $E|Z,S$. The decision $Y_{E}$ is then  based directly on $E$.}
   % \label{subfig:models-b}
  \end{subfigure}
\caption{Left:  illustration of the causal relation between the data. Right: illustration of our  pre-processing method.} \label{fig:mainmodels}
\end{figure*}
%%%%
%%%%

\begin{example}\label{exa:bias}
The SAT (Scholastic Assessment Test) is a standardized test widely used for college admissions in the United States
%In principle, the SAT score is supposed 
aiming at
indicating the skill level of the applicant, and therefore her potential to succeed in college. 
However, the performance at the test can be affected by other socio-economic, psychological, and cultural factors. 
% Because of these  spurious influences, the SAT practice has been routinely criticized for significant gender, race and income bias, and has been
% the object of extensive social studies. 
For instance, a recent study \cite{taketwo} points out that, on average, black students are less likely to undergo the financial burden of retaking the test than white students. This causes a racial gap in the scores,  since retaking the test usually improves the result. 
Another study \cite{testanxitetyhannon} reports that,  on average,  girls score  approximately 30 points less on SAT than boys, despite the fact that girls routinely achieve  higher grades in  school. According to \cite{testanxitetyhannon}, the cause is the higher sensitivity to stress and test anxiety among females.
\end{example}

\begin{example}\label{exa:bias2}
  Many healthcare systems in the United States rely on prediction algorithms to identify patients in need of assistance. One of the most used indicators is the individual healthcare expenses, as  they are easily available in the insurance claim data. However, healthcare spending is influenced not only by the health condition, but also by the socio-economic status. 
 A recent study \cite{obermeyer2019dissecting}  shows that typical algorithms used by these healthcare systems are negatively biased against black patients, in the sense that, for the same prediction score, black patients are in average sicker than the white ones.  According to \cite{obermeyer2019dissecting}, this is due to the bias in the healthcare spending data, since  black patients spend less on healthcare due to lower financial capabilities and lower level of trust towards the white-dominated medical system and practitioners.
\end{example}

In the above examples, the ``true skills'' and the ``true health status'', respectively, are the legitimate features $E$ (explanation) on which we should base the decision. Unfortunately $E$  is not directly observable. What we can observe, instead, is the result of the SAT test and the healthcare-related spending, respectively. These are represented by the variable   $Z$. These indicators, however, do not faithfully represent $E$, because they are  influenced also by other factors, namely the economical status (or the gender),  and the race, respectively. These are the sensitive attribute $S$.

The line of research that advocates the use of statistical parity  \cite{calders_nbayes, choi2021group,islam2022fair,louizos2015variational,madras2018fairness} adheres to the ``we are all equal'' principle~\cite{friedler2016possibility}, and makes the basic assumption that \emph{$E$ and $S$ are independent}. However, in many cases, like for instance in decisions regarding the medical treatment of genetic illnesses, race or gender could have a direct effect on the likeliness of the medical condition.
%Alternatively, the link could be indirect (mediated by other variables), but nevertheless causal and statistical parity cannot be implied. 
For example, in our second running example, the real health status is on average lower in the black population because of socio-economic factors. Hence, we allow the possibility of a 
%direct or mediated 
link between the sensitive attribute $S$ and the explaining value $E$, and aim to remove  the discrimination  introduced by the link between $S$ and $Z$. 
The method we propose to remove the discrimination works equally well whether or not there is a link between $S$ and $E$, and it does not modify this relation. 

To summarize, in the original (unfair) scenario the decision $Y$ is based on $Z$, which is influenced by both $E$ and $S$. The situation is represented in Figure~\ref{fig:mainmodels} (left). The arrow from $S$ to $Z$ represents that there is a causal relation between  $S$ and $Z$, and similarly for the other solid arrows\footnote{Note that $E$  is what in causality is called \emph{a mediator}.}, while the dashed arrow between $S$ and $E$ represents a relation that may or may not be present. 
%\emph{A collider} structure introduces a spurious link or exaggerates  the existing causal link by the additional non causal correlation. The variables that are otherwise independent become dependent when conditioning on $Z$, for example, using a SAT score or the medical expenditure for the decision. 
% In this paper  we make the  basic assumption that \emph{$E$ and $S$ are independent}.
% So, for instance, we assume that the ``true skills'' are independent from race and gender~\footnote{The validity of this assumption is of course  domain-dependent. It is reasonable to assume it in the situations like those described in Example~\ref{exa:bias}. In other cases, like for instance in decisions regarding the medical treatment of genetic illnesses, it would not be appropriate to consider the illness independent from the race or the gender.}.  
In order to take a fair decision, we would like to base the decision $Y$ only on $E$, but, as explained before, $E$ may  not be directly available. 
% In this work, we address the problem of recovering $E$ from the observable variables and from some additional knowledge about the relation between them. More precisely, t
Therefore, we need to determine what is the most likely value of $E$ for the given values of $S$ and $Z$. To this purpose, we will derive the conditional distribution of $E$ given $Z$ and $S$, i.e. $\pr[E|Z,S]$. The objective is illustrated in Figure~\ref{fig:mainmodels} (right). 

Note that $E$ can be multi-dimentional, and that we represent the effect of other possible latent variables by the randomness in the distribution of the data.

The method we propose uses a combination of the \emph{Bayes theorem} and  the \emph{Expectation-Maximization method} (EM) ~\cite{Dempster:77:RSS}, a powerful statistical technique to estimate unobservable variables as the  maximum likelihood parameters of empirical  data observations. 
We  call our method BaBE, for \emph{Bayesian Bias Elimination}. 

% %%%%
% %%%%
% \begin{figure}[t]
% \centering
%   \begin{subfigure}[t]{0.38\linewidth}
% \centering
%  \includegraphics[height=3cm]{Graphs/causal}
%    \caption{Causal relation between the variables $S$, $E$,  $Z$, and the decision $Y_{Z}$ based on $Z$. }     \label{subfig:models-a}
%   \end{subfigure}
%   \hfill
%   \begin{subfigure}[t]{0.55\linewidth}
%   \centering
%  \includegraphics[height=3cm]{Graphs/IBU-fairness}   
%   \caption{Derivation of $E|Z,S$. The decision $Y_{E}$ is then  based directly on $E$.}
%   \label{subfig:models-b}
%   \end{subfigure}
% \caption{(a) illustrates the causal relation between the data,  (b) illustrates our   pre-processing method.}  
% \label{fig:models}
% \end{figure}
% %%%%
% %%%%

BaBE relies on some additional knowledge, 
namely an estimation of  the conditional distribution of $Z$ given $S$ and $E$, i.e., $\pr[Z|E,S]$.  This estimation can be obtained by collecting additional data. For instance, for   Example~\ref{exa:bias2}, we could use the richer set of biomarkers, like in \cite{obermeyer2019dissecting}. 
Alternatively, it can be produced by studies or 
experiments in a controlled environment. For instance, for Example~\ref{exa:bias}, we could assess skills in some subjects by in-depth examinations, and derive statistics about their SAT performance both at the first attempt and after several retakes. Another possibility is to collect data on the subsequent performance of the students that have been accepted, and of those who have not been accepted in the school in question but have been accepted in another school.

One obvious question that may arise is: what are the advantages of deriving $\pr[Z|E,S]$, rather than directly $\pr[E|Z,S]$, from the additional data? (The derivation of the latter from the former is the essence of our proposal.) We argue that, while in general 
there may not be any advantage, there are real-life situations in which $\pr[Z|E,S]$ is more ``universal'' than $\pr[E|Z,S]$, in the sense that the first does not depend on the distribution of $E|S$ ($E$ given $S$), while the latter does. As a consequence, the knowledge of the first can be re-used in different contexts, while the latter cannot. 
One typical example is the study of 
symptoms ($Z$) induced by certain diseases ($E$), which may also depend on the gender or other characteristics such as ethnicity, age, etc. ($S$): $\pr[Z|E,S]$ can be statistically estimated from medical data $D$ collected by some hospitals, and it is reasonable to assume that it does not depend on the distribution of $E|S$, which, in contrast, can vary greatly depending on the geographical area, on the social context, etc.  Also $\pr[E|Z,S]$ could be estimated from $D$, but it may depend on $E|S$. For example, in towns that are very polluted (area $A_1$), the risk that coughing (symptom, $Z$) indicates lung cancer rather than a simple cold (diseases, $E$) may be much higher than in the (less polluted) area $A_2$ where the data $D$ were collected. The idea of BaBE to predict diseases in $A_1$ is to estimate $\pr[Z|E,S]$ in the area where complete data  (including $E$) are available, in this case, area $A_2$, assuming that the same $\pr[Z|E,S]$ is valid also in $A_1$. Then, we estimate the empirical probability (frequency)  $\pr[Z|S]$ in $A_1$. Subsequently, using the above $\pr[Z|E,S]$ and $\pr[Z|S]$, the BaBE method allows us to derive $\pr[E|S]$ in $A_1$. Finally, by applying the Bayes theorem to the above probabilities, we  derive $\pr[E|Z,S]$ in $A_1$.

% The estimation of  $\pr[E|Z,S]$ in this way would be less accurate because not all patients affected by symptoms necessarily enter the medical system. 

% Note that deriving $E|Z,S$ directly in this way would be less efficient, because in general we have less control over $Z$.\footnote{In order to estimate $\pr[E|Z,S]$ we need to consider all possible degrees of skill, and to select a number of subjects for each degree, balanced across the different degrees. Such selection can be carried on efficiently by considering the performance of potential candidates in previous schools, for example, and then double checking their skill degree with in-depth examinations. 
% % Doing the same thing for $P_{E|Z,S}$ is more difficult, as the SAT scores are more aleatory and less predictable.
% } 

We note that the scenario we are considering is the same as that of machine learning (ML). Indeed, in machine learning, we assume the existence of a dataset (for instance, historical data), i.e., a representation of the joint distribution $\pr[E,Z,S]$. In the case of ML, we typically derive directly the prediction of $E$ for a given value of $Z$ and $S$. However,  
it may happen that 
$\pr[E|Z,S]$ depends on the distribution of $E|S$, which can vary greatly depending on the context. 
The effect of the distribution shift is  a well known problem in ML, impeding the deployment of the model in  populations that are different from the one in the training data~\cite{quinonero2008dataset, ovadia2019can}.

% the proportion of students with high potential in Paris may not be the same as in London; 
% As another example, the racial health gap can be different between Europe and the United States. 
In contrast, $\pr[Z|E,S]$ may be more ``universal'', and this is exactly the case in which our BaBE method is applicable, also in case of a distribution shift (on $E$).  In this case, it is convenient to invest in the estimation of $\pr[Z|E,S]$, which can be done once and then transferred to different contexts. 
Indeed, one advantage of our approach is that it allows the transfer of causal knowledge. Namely, once we learn the relation $\pr[Z|E,S]$, the method can be applied to a population with different proportions, i.e. different  $\pr[E|S]$ (but the same $\pr[Z|E,S]$). 
For more discussion about this point, we refer to  \cite{Bareinboim2013CausalTW,Bareinboim_2013,bareinboim2014transportability,pearl2022external,bengio_2021}. 
Another case in which our method presents an advantage over ML is when it is possible to estimate causal prior knowledge from experimental data, which is typically small. Machine learning algorithms need large data sets to achieve a good performance, whereas Bayesian statistics can be suitable also for small sample sizes~\cite{heerwegh2014small,mcneish2016using}.

% To obtain $\pr[E|Z,S]$ from $\pr[Z|E,S]$ we can then apply the conditional Bayes theorem and compute 
% %\begin{align*}
% %&\pr[E=e|Z=z,S=s] \;=\;\\
% %&\frac{\pr[Z=z|E=e,S=s] \cdot \pr[E=e|S=s] }{\pr[Z=z|S=s]}
% % \;=\;\frac{\pr[Z=z|S=s,E=e] \cdot \pr[E=e|S=s] }{\pr[Z=z|S=s]}.
% %\end{align*}
% $\pr[E=e|Z=z,S=s]$ as the ratio between
% $\pr[Z=z|E=e,S=s] \cdot \pr[E=e|S=s]$ and $\pr[Z=z|S=s]$. However, while $\pr[Z=z|S=s]$ can be estimated from the data at our disposal, the problem is that the prior $\pr[E=e|S=s]$ is unknown. 
% % $\pr[E]$ (which is the same as $P_{E|S}$ due to the assumption of independence between $E$ and $S$), 
% In order to estimate it, we will use the \emph{Expectation-Maximization method} (EM) ~\cite{Dempster:77:RSS}, a powerful statistical technique to estimate latent variables as the  maximum likelihood parameters of empirical  data observations. 
% Since  our method relies on the Bayes theorem, we  call it BaBE, for \emph{Bayesian Bias Elimination}. 

Once  $\pr[E|Z,S]$ is estimated, we  pre-process the training data  by assigning a decision $Y$ based on the most likely value $e$ of $E$, for given values of $S$ and $Z$. If $e$ does not have enough probability mass, however, we may not achieve CSP, or even a good approximation of it. 
In such case, we can base the decision on a threshold for the estimated $E$, aiming at achieving equal opportunity (EO) \cite{hardt2016equality} instead, that we regard as a relaxation of CSP. 
Formally, EO is descibed as follows:
$\pr[\hat{Y}=1| Y=1,S=1] \; = \; 
 \pr[\hat{Y}=1|Y=1,S=0]$,
where $Y$ represents the ``true decision'', i.e., the decision based on a threshold for the real value of $E$. 

We validate our method by performing experiments,\footnote{The software used for implementing our approach and for performing the experiments is available at
%\href{https://github.com/BaBE-Algorithm/BaBE}{https://github.com/BaBE-Algorithm/BaBE}.}
{\tt https://github.com/BaBE-Algorithm/BaBE}.}
both on synthetic datasets and on the real ‘The National Health and Nutrition Examination Survey' ($NHANES$) data set \cite{NHANES}, featuring biological and chronological age of the patients.
In both cases, we obtain a very good estimation of $\pr[E|S]$, and 
we achieve a  good level of both accuracy and fairness. 

% In the case of the health care experiment, our results are in line with the analysis of  \cite{obermeyer2019dissecting}. When the  estimated distributions $\hat \pr[E|Z=z,S=s]$ are strongly unimodal, we also obtain a very good accuracy in estimating $E$. In any case, the prediction of $Y$ (based on the estimation of $E$) has high accuracy, far better than the methods based on statistical parity.

% As a final remark, we note that an additional advantage of our method is that the resulting model is  highly explainable, as we force the decision to be based on $E$. Explainability is a key desideratum for any model that is to be used as help to take  real-world decisions. 

% An instance of the EM method  has been used also in the field of Privacy Enhancing Technologies (PETs),    
% where it is known as \emph{Iterative Bayesian Update} (BaBE), see for instance \cite{agrawal2001design,elsalamouny2019convergence}. 

Summarizing, our contributions are as follows:
%\subsection{Contributions}
\begin{itemize}

\item We propose an approach to estimate the  distribution of an explaining variable $E$, using  the Expectation-Maximization method (EM). To the best of our knowledge, this is the first time that EM is used to achieve fairness without assuming the independence between $E$ and the sensitive attribute $S$.  
From the above, we then derive an estimation of $\pr[E|Z,S]$.

\item Using the estimation of $\pr[E|Z,S]$, we  show how to to estimate the values of $E$ and $Y$ for each value of $Z$ and $S$. These estimations are then used to \emph{pre-process the data in order to achieve CSP and/or EO}. 
% {\color{blue} Maybe we can add: We learn the relation $P_{E|S}$ of the target distribution, this allows to transfer causal knowledge to the population with different proportions than the one where experiments were performed.}

\item We show experimentally that our proposal outperforms other approaches for fairness, in terms of CSP, EO, accuracy, and other metrics for fairness and precision of the estimations. 

\end{itemize}

\paragraph{Related Work}
The notion of fairness that we consider in this work was introduced in \cite{kamiran_quantifying_2013} 
%under the name \emph{conditional non-discrimination}, 
and it is known nowadays as 
\emph{conditional statistical parity} (CSP)
%following the terminology introduced in
\cite{corbett2017algorithmic}. 
In \cite{kamiran_quantifying_2013},  CSP is achieved through data pre-processing, by applying 
\emph{local massaging} or \emph{local preferential sampling} techniques. However, the authors consider only an explanatory variable $E$ which is part of the data at the time of deployment of their method. 

Note that our  $Z$, although observable,  cannot be considered as an  explanatory variable, because we are assuming it is influenced by the sensitive attribute in a way that would make it unfair to base the decision on $Z$. 
%\iffull
To better understand the difference, consider one of the main examples used in \cite{kamiran_quantifying_2013} to  illustrate the idea, which  is a kind of \emph{Berkeley admission anomaly},  an instance of the \emph{Simpson paradox}~\cite{glymour2016causal}.  In this example, the admittance in a certain university looks biased against females, but the disparity can actually be explained by the fact that female students tend to choose a more selective program. In this case, the explanatory variable is a mediator (the choice of the program), 
and it is assumed to be legitimate as a cause for disparity. By contrast, in our example the observed score is considered to be influenced by social discrimination, hence it cannot be directly used as an explanatory variable. 
%\fi

The work closest to ours is \cite{calders_nbayes}, where there is a model containing a latent variable whose distribution is discovered through the Expectation Maximization method. However, in \cite{calders_nbayes} the notion of fairness considered is \emph{statistical parity} (SP). 
Using SP as a constraint 
(thus applying a sort of \emph{self-fulfilling prophecy} approach) and other constraints such as the preservation of the total ratio of positive decisions, the authors determine what the distribution 
$\pr[Z|E,S]$ should be, they distribute the probability mass uniformly on all attributes,  and they finally apply the EM method to determine the fair labels. 
In contrast, we are aiming at discovering what is the most probable value of  $E$ for each combination of values of the other attributes ($S$ and $Z$), so as to take a fair decision based on $E$, considered as the explanatory variable. We do not require statistical parity, nor do we assume a uniform distribution on all attributes. Instead, we use external knowledge 
%(social studies, etc.) 
as prior knowledge for applying the EM method. 
 Another difference 
 %between \cite{calders_nbayes}  and our work 
 is that they optimize  accuracy  with respect to the observed biased labels, whereas we  consider accuracy towards the true fair label dependent on $E$, considered as the actual attribute on which the decision should be made.
 
Similar in spirit to \cite{calders_nbayes}, \cite{louizos2015variational} tries to discover the latent variable which is maximally informative about the decision,  while  minimizing the correlation with the sensitive attribute (statistical disparity); this is done via a deep learning technique.
%that they call \emph{variational fair auto-encoding}. 
Also \cite{kusner2018counterfactual,louizos2017causal,madras2018fairness} use deep learning latent variable models: \cite{kusner2018counterfactual,louizos2017causal} consider latent confounders and  \cite {madras2018fairness} considers the sensitive attribute as a  confounder. The situations in which these assumptions apply are quite different from the problem we study, since they aim at eliminating the effect of the confounder, while for us the unobservable variable is a mediator, and we want to use it as the basis for a fair decision.  As a consequence, the notion of fairness those works aim at achieving is not suitable for our case. 
% \cite{kusner2018counterfactual} also considers latent confounders between the sensitive attribute and the decision, and aims at achieving a counterfactual fairness notion. 
\cite{chiappa2019path} introduces path-specific counterfactual fairness, where (among other cases) they consider the latent cause of a mediator between the sensitive attribute and the decision. This is more similar to our notion of fairness. However, \cite{chiappa2019path} assumes that the latent variable is independent from the sensitive attribute; as such, their method is not directly applicable to our problem. 
\cite{choi2021group} uses probabilistic circuits to impose statistical parity and to learn a relationship between the latent fair decision and other variables. 
Finally, \cite{Feldman} uses a notion of fairness called \emph{disparate impact}, which is similar to statistical disparity, except that it is defined as a ratio (instead of a difference) between the probabilities of positive decisions for each group. Similarly to our work, \cite{Feldman} applies a corrective factor to the outcome of the observed variable $Z$, but their goal is to minimize the disparate impact (within a certain allowed threshold $\alpha$), which is again in the spirit of minimizing statistical disparity. Also their  technique is very different: they consider the  distributions on the observed variable $Z$ for each group, and they compute new distributions that minimize  the earth movers' distance and achieve the threshold $\alpha$. Then, they map each value of $Z$ (for each group) on the new distribution so to maintain the percentile. 

%  {\color{blue}{For Ruta: Please check whether there are other papers that work with latent variables. Check also if any of the cited papers has been published, and update the bib with the published version}}

% {\color{purple}{Rewritten up to here}}

% Other approaches aiming to satisfy statistical parity or Disparate Impact fairness notions are \cite{kamiran2012data,calmon2017optimized,wei2020optimized,sharma2020data} using optimization, sampling and re-weighting techniques, \cite{zemel2013learning,louizos2015variational} applying deep learning or based on optimal transport methods \cite{gordaliza2019obtaining}. Less numerous methods aiming to satisfy individual fairness are mostly based on K-Nearest Neighbour clustering \cite{luong2011k}.
% A distinct group of pre-processing approaches use GANs (Generative Neural Networks) to generate fair data \cite{rajabi2022tabfairgan,sattigeri2019fairness,zhang2018mitigating,hong2021federated, reimers2021towards, zhang2020towards} including the instance based on causal graphs \cite{xu2019achieving}. 
% Finally a line of pre-processing methods satisfying counterfactual fairness include \cite{kusner2018causal,zmigrod2019counterfactual,joo2020gender,pitis2020counterfactual,de2021transport}.

\section{Preliminaries and Notation}\label{sec:preliminaries}
%We recall some basic concepts that will be used throughout the paper. The reader who is already familiar with one or more of them can skip the corresponding section(s).

\paragraph{$\hat{E}$, $\hat{Y}$ and $Y$ notations} In this paper, $\hat{E}$ (with generic value $\hat{e}$) represents the estimation of the explanatory variable $E$. Similarly, $\hat{Y}_{\hat{E}}$ (with generic value $\hat{y}$) represents the estimation of the decision, based on  $\hat{E}$, rather than the prediction of the  model. To put it in context,  recall that we are proposing a pre-processing method: $\hat{y}$ represents the value that we assign as decision in a sample of the training data during the pre-processing phase. The fairness and precision notions are defined with respect to these estimations.
We   use $Y_Z$  to indicate the biased decision based on $Z$, and $Y_E$ for the ``true'' decision based on $E$. When clear from the context, we may use $Y$ instead of $Y_E$.

\paragraph{The Expectation-Maximization Framework}
\label{sec:em}
\label{sec:ml}  
%A statistical model is often used to explain the observed output  of a system. 
Let  $O$ be a random variable depending on an unknown parameter $\vtheta$.
Given that we observe $O = o$, the aim is to find the value of $\vtheta$ that maximizes the
probability of this observation, and that therefore is \emph{its best explanation}.   
To this purpose, we use the \emph{log-likelihood function}  
$L(\vtheta) = \log \pr[O=o | \vtheta]$.
%
%where $\pr[O=o | \vtheta]$ is the conditional probability of $O=o$ given $\vtheta$. 
A Maximum-Likelihood Estimation (MLE) of the parameter is then defined as $\argmax_\theta L(\vtheta)$ (which is  the $\theta$ that maximizes $\pr[O=o | \vtheta]$, since $\log$ is monotone). 
The Expectation-Maximization (EM) framework \cite{Dempster:77:RSS,mclachlan2007algorithm,Wu:83:jastat} is a powerful 
method for computing $\argmax_\theta L(\vtheta)$.

\subsection{Metrics for the quality of estimations}\label{sec:metric for quality}

\paragraph{The Wasserstein distance}
This  distance is  defined between probability distributions on a  metric space. 
%Since it takes into account the ground distance, it is more refined  than Total Variation, and it is more suitable for measuring the precision of a probability estimation when the problem at hand involves decisions based on the ground distance. 
%
Let $\mathcal X$ be a set provided with a distance $d$, and $\mu,\nu$ be two discrete probability distributions on $\mathcal X$. The Wasserstein distance between $\mu$ and $\nu$ is defined as
\begin{equation}\label{eq:Wass}
    {\mathcal W} (\mu,\nu) \;=\; \min_\alpha \sum_{x,y\in {\mathcal X}} \alpha(x,y)\,d(x,y),
\end{equation}
where $\alpha$ represents a \emph{coupling}, i.e., a joint distributions with marginals $\mu$ and $\nu$ satisfying the properties
$ \sum_{y\in {\mathcal X}} \alpha(x,y) \;=\; \mu(x)$ and 
$\sum_{x\in {\mathcal X}} \alpha(x,y) \;=\; \nu(y)$.

% Intuitively, it is defined as the minimum ``cost'' necessary for making the two distributions equal, where the cost is defined in terms of the probability mass to be transported from one distribution to the other, and the ground distance between the origin and the target of the transportation. 

% \subsection{Causal Graph}
% Following %the J.Pearl framework 
% \cite{pearl_causality_2009}, a causal graph $G = (V, E)$ is a directed acyclic graph (DAG) composed of a set of variables/vertices $V$ and a set of edges $E$ that describe the dependence relations and causal directions between the variables.

\paragraph{Accuracy}\label{sec:accuracy}
Let $X,Y$ be two random variables with support ${\mathcal X}$ and ${\mathcal Y}$ respectively, and joint distribution $\pr[X,Y]$. Let 
 $f:{\mathcal X}\rightarrow {\mathcal Y}$ be a function that, given $x\in {\mathcal X}$,  \emph{estimates} the corresponding $y$, and let $\hat{y}$ be the result, i.e., $\hat{y}=f(x)$. The accuracy of $f$ is defined as the expected value of $\mathds{1}_{\hat{y}=y}$, that is the function that gives $1$ if $\hat{y}=y$, and $0$ otherwise.
When the distribution is unknown,  the accuracy is estimated empirically via a set of pairs 
 $\{(x_i,y_i)\mid i\in {\mathcal I}\}$ independently sampled from $\pr[X,Y]$ (\emph{testing set}), and is defined as  
\vspace{-1mm}
\begin{equation}\label{eq:Acc}
    \mathit{Acc}(\hat{Y}, Y) \;= \; \frac{1}{|{\mathcal I}|}\;\sum_{i\in{\mathcal I}}\mathds{1}_{\hat{y}_i=y_i}
\;\mbox{where}\;\hat{y}_i = f(x_i).
\end{equation}

 % We will use the accuracy to evaluate the prediction of $Y$, and also of $E$. 
 % In the first case, the testing set is of the form 
 % $\{((s_i,z_i),y_i)\in {\mathcal S}\times{\mathcal Z}\times {\mathcal Y}\mid i\in {\mathcal I}\}$. In the second case, the testing set is of the form 
 % $\{((s_i,z_i),e_i)\in {\mathcal S}\times{\mathcal Z}\times {\mathcal E}\mid i\in {\mathcal I}\}$.

\paragraph{Distortion}\label{sec:distortion}
If the variable to be predicted ranges over a metric space, and the metric is important for decision-making (like the case of $E$ in our  examples), accuracy is not always the best way to measure the quality of the estimation. Arguably, it is more suitable 
to use the \emph{distortion}, i.e.,  the expected distance between the true value and its estimation. Using the testing set $\{((z_i,s_i),e_i)\mid i\in {\mathcal I}\}$, the  distortion in the estimation of $E$ is  defined as 
\vspace{-2mm}

\begin{equation}\label{eq:Dist}
    \mathit{Dist}(\hat{E}, E) =  \frac{1}{|{\mathcal I}|}\;\sum_{i\in{\mathcal I}} |\hat{e}_i-e_i|, 
 \mbox{ where }\hat{e}_i = f(z_i,s_i).
\end{equation}

\subsection{\bf Metrics for fairness}\label{sec:fairness metrics}
SP, CSP, and EO 
are rarely achieved, since  they require a perfect match. It is therefore useful to quantify the level of (un)fairness, i.e., the difference between the two groups. We will use the following metrics: 

\begin{description}
 \item[Statistical parity difference]   (SPD) 
\begin{equation}
\label{eq:SPD}
\pr[\hat{Y}_{\hat{E}}=1|S=1] - \pr[\hat{Y}_{\hat{E}}=1|S=0].
\end{equation}
 
\item[Conditional statistical parity difference]  ({CSPD})  
\begin{equation}
\label{eq:CSPD}
\pr[\hat{Y}_{\hat{E}}=1|E,S=1]- \pr[\hat{Y}_{\hat{E}}=1|E,S=0].
\end{equation}
% \item[Expected conditional stat. parity difference] ({CSPD})
% \begin{equation}
% \label{eq:ECSPD}
% \sum_e \pr[E=e] \;\mbox{{CSPD}}_e. 
% \end{equation}
\item[Equal opportunity difference]  
     (EOD)
     \begin{equation}
\label{eq:EOD}
\pr[\hat{Y}_{\hat{E}}=1|Y_E=1,S=1] - \pr[\hat{Y}_{\hat{E}}=1|Y_E=1,S=0].
     \end{equation}
\end{description}

\section{The BaBE method}

In this section we describe the BaBE approach.
We briefly recall the problem: we have a data model represented in Figure \ref{fig:mainmodels}, where $S$ is the sensitive attribute, $E$ is the explanatory variable on which a fair decision should be based,  and  $Z$ is an observed but biased version of $E$. We need to estimate the distribution $\pr[E|Z,S]$. The first step is to estimate the distribution of $E$ for each group, $\pr[E|S]$. We accomplish this task by adapting the Expectation-Maximization method to our particular setting. 
Then, from $\pr[E|S]$ we  derive, using the Bayes theorem, the estimation of $\hat \pr[E|S,Z]$, from which we finally derive $\hat{E}$ and $\hat{Y}_{\hat{E}}$. The pipeline of the process is provided in Figure~\ref{fig:pipeline}.

\begin{figure}
    \centering
    \includegraphics[height=3cm]{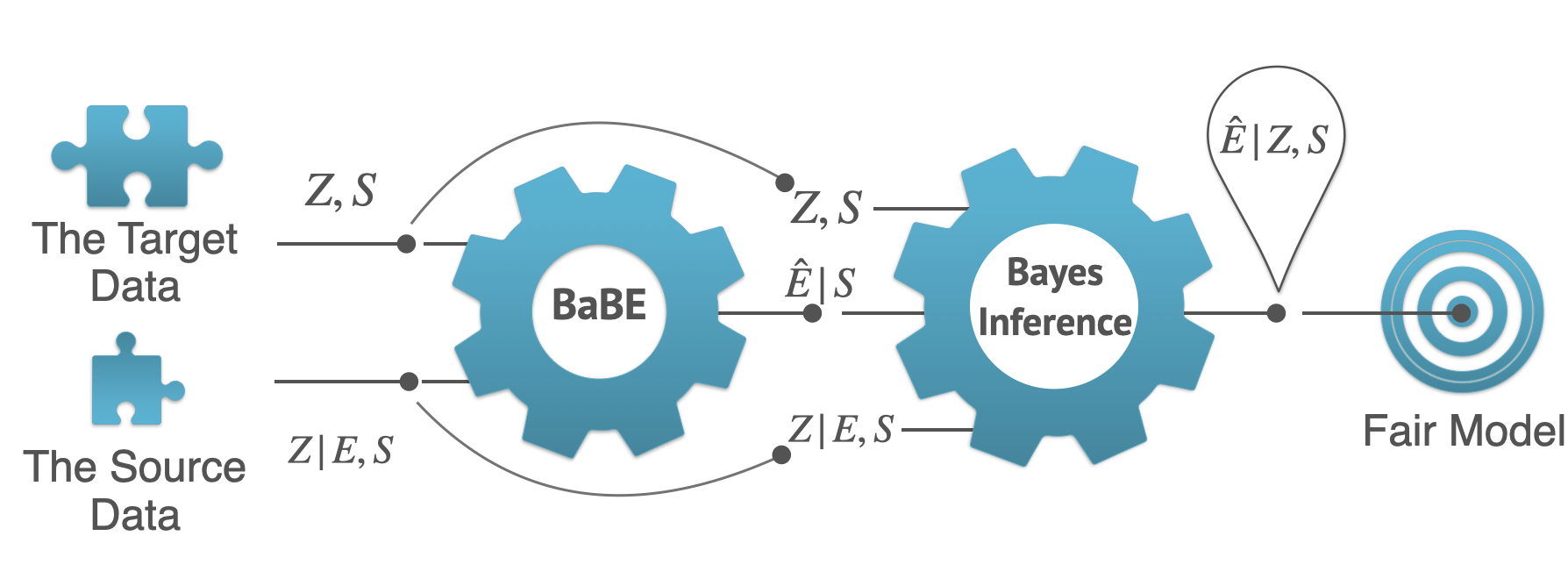}
    \caption{The pipeline of BaBE application. The variable $E$ is observable in the source data and $\pr[Z|E,S]$ can be derived. The target data is the one where $E$ is not observable and we want to recover it using  $\pr[Z|E,S]$ derived from the source data. We input $\pr[Z|E,S]$ and statistics from the observable variables in the target data to BaBE and estimate $\hat{E}$ consistent with the target distribution (possibly different than in the source data). We then again use $\pr[Z|E,S]$ (from source data), observable variables (from the target data) and $\hat{E}$ (BaBE estimation) to inference $\hat{E}|Z,S$ for each sample in the target data. }
    \label{fig:pipeline}
\end{figure}

\subsection{Deriving $\hat \pr[E|S]$}
\label{sec:algibu}

% The BaBE algorithm is a particular case of the Expectation Maximization algorithms. It is used in the Privacy community to recover aggregate data distributions from the added noise~\cite{agrawal_ibu, EhabFullConv}. 
% We assume a random variable $E|S$ on sets $\mathcal E$ $\mathcal S$ with distribution $P_{E|S}$; we denote with $P_{E|S}[e|s]$ the probability that $E=e$ given $S=s$ (for $e\in \mathcal{E}$ and $s \in {\mathcal S}$). 
% We also assume $N$ i.i.d. (independent and identically distributed) samples of $E|S$, call them $E_1|S_1$, $E_2|S_2$, . . . , $E_N|S_N$, each following distribution $P_{E|S}$. 
% We assume that the value of every $E_i$ is independently biased by a discriminatory mechanism dependent on or equivalent to the sensitive attribute $P_{Z|E,S}[z|e,s]$ to produce a biased value $z_i \in {\mathcal Z}$ that we observe in the biased data set. This yields a new random variable $Z$ on $\mathcal Z$ and let us denote with $\bar z$ the sequence of biased values $z_1,\ldots,z_N$.

We estimate  the unknown parameter $\pr[E|S]$ as the MLE of a sequence of samples $(\bar{z},\bar{s}) \ =\{(z_i,s_i)\,|\,i \in [1,N]\}$,\footnote{We use the notation $[a,b]$ to represent the integers from $a$ to $b$.}  assuming that we know the effect of the bias, i.e., $\pr[Z|E,S]$. 
%We refer to Appendix~\ref{appendix} for the technical details. 
% Given the observations $z_i$ for $i = 1, 2, . . . , N$, the BaBE approximates a Maximum Likelihood Estimate (MLE for short) for
% them as follows.
%
We denote by $\varphi_s[z,\bar z]$ the empirical probability of $Z=z$ given $S=s$, i.e., the frequency of $z$ in the samples with $S=s$.  Algorithm~\ref{alg:IBU} estimates $\pr[E|S]$ by starting with the uniform distribution and by iteratively computing at step $t$ a new estimation $\hat \pr[E|S]^{(t)}$ from the previous one,  getting closer and closer to the MLE. The proof of correctness of Algorithm~\ref{alg:IBU} is provided in   \cite{binkyte2023babe} (the archival version of this paper). 

We have experimentally verified that our method is quite efficient: 
The running time of Algorithm~\ref{alg:IBU} on the data of Section~\ref{sec:experiments} is a few seconds. Details are reported in the additional material.  

% \begin{algorithm}[tb]
% \caption{\;Estimation of $\pr[E|S]$}
% \label{alg:IBU}
% \textbf{Input}:\small $\{(z_i,s_i)\,|\,i\in[1,N]\}$ , $\pr[Z=z|E=e,S=s]\}$ and $\gamma$ (desired level of precision) \\
% % \textbf{Parameter}: Optional list of parameters\\
% \textbf{Output}: 
% An approximation $\hat{P}[E|S]$ of the MLE\\[-2ex]
% \begin{algorithmic}[1] %[1] enables line numbers
% \STATE  Compute $\varphi_s[z, \bar z]$  for all $z, s$.
% \STATE   $\hat \pr[E=e|S=s]^{(0)} = \frac 1 {|{E}|}$  for all $e$, where $|{E}|$ is the cardinality of  $E$.
% \STATE $t = 0$
% \REPEAT
% \STATE $t=t+1$
% \STATE $\hat{P}[E=e|S=s]^{(t)}$ = \\
% \qquad $\sum\limits_{z \in {\mathcal Z}}\varphi_s[z, \bar z] \frac{\pr[Z=z|E=e,S=s]\hat{P}[E=e|S=s]^{(t-1)}}{\sum\limits_{e' \in {\mathcal E}}\pr[Z=z|E=e',S=s]\hat{P}[E=e'|S=s]^{(t-1)}}$, \\
% for all $e , s$
% \UNTIL {$\left|\hat{P}[E=e|S=s]^{(t)}-\hat{P}[E=e|S=s]^{(t-1)} \right| < \gamma $, for all $e , s$}
% \STATE \textbf{return} $\hat{P}[E|S] = \hat{P}[E|S]^{(t)}$
% \end{algorithmic}
% \end{algorithm}

\begin{algorithm}[ht]
  \caption{BaBE: Bayesian Bias Elimination}
  \label{alg:IBU}
  \begin{algorithmic}
    %\SetAlgoLined
   \State \textbf{Data}: $\{(z_i$ , $\pr[Z=z_i|E=e,S=s])\}_{i \in \{1..N\}}$
   %, where $z_i$ is the $i^{th}$ observable and $\pr[Z=z|E=e,S=s]$ is the conditional probability table defining the mechanism used to yield $z$ from $e$.
   and $\gamma$ (an allowed error in estimating $\pr[E=e|S=s]$)
   
   \State \textbf{Result}: An approximation (up to $\gamma$) $\hat \pr[E|S]$ of the MLE\\

  \State Compute $\varphi_s[z, \bar z]$, for every $z \in \mathcal Z$ and $s\in \mathcal S$
  
  \State $\hat \pr[E=e|S=s]^{(0)} = \frac 1 {|{\mathcal E}|}$, for every $e\in \mathcal E$
  \State $t = 0$
  
  \Repeat
   \State $t=t+1$
 \State $\hat \pr[E=e|S=s]^{(t)} =\sum\limits_{z \in {\mathcal Z}}\varphi_s[z, \bar z] \frac{\pr[Z=z|E=e,S=s]\hat \pr[E=e|S=s]^{(t-1)}}{\sum\limits_{e' \in {\mathcal E}}\pr[Z=z|E=e',S=s]\hat \pr[E=e'|S=s]^{(t-1)}}$, for every $e \in {\mathcal E}$ and $s\in \mathcal S$
 \Until $\ \forall e \in {\mathcal E}\, \forall s\in \mathcal S.\, \left|\hat \pr[E=e|S=s]^{(t)}-\hat \pr[E=e|S=s]^{(t-1)} \right| < \gamma $
 
 \State \Return $\hat \pr[E|S] = \hat \pr[E|S]^{(t)}$
% \label{alg1-line16}
\end{algorithmic}
\end{algorithm}

% We remark that a similar adaptation of the EM algorithm, for a totally different problem, appears in the privacy literature \cite{agrawal_ibu,elsalamouny2019convergence} under the name of \emph{Iterative Bayesian Update}. In that case, the 
% variable to estimate was the original distribution of data obfuscated by a privacy-protection mechanism. The problem in that setting was somewhat simpler, because it involved only two variables (the original distribution and the empirical one on the observed obfuscated data) and there was no conditioning. 

\subsection{Deriving $\hat \pr[E|Z,S]$  from $\hat \pr[E|S]$}
\label{sec:decode}

Given the data $\{(z_i,s_i)\,|\,i\in[1,N]\}$,
the conditional distributions $\pr[Z|E,S]$, and  the estimation $\hat \pr[E|S]$, 
we use the Bayes formula to
estimate $\hat \pr[E=e|Z=z,S=s]$ as
\begin{align*}
\textstyle
\hat \pr[E=e|Z=z,S=s] = \frac{\pr[Z=z|E=e,S=s]\hat \pr[E=e|S=s]}{\pr[Z=z|S=s]}
\label{eq:inf_edz}
\end{align*}

\subsection{Deriving ${\hat{E}}$ and $\hat{Y}_{\hat{E}}$ from $\hat\pr [E|Z,S]$}\label{sec:Y from E}

% For real-world classifications, we usually need not only the estimation of $E$, but a prediction $\hat Y$ that is based on that estimation. This is the function $f$ illustrated in Section~\ref{sec:accuracy}.
% We also need the decision to measure SP and CSP of the pre-processed data. 
We propose two ways to derive $\hat{Y}_{\hat{E}}$ for pre-processing the samples in the training data, depending on how much probability mass is concentrated on the mode of $\hat \pr[E|Z,S]$. We denote by $\tau$ the threshold for the values of $E$ that qualify for the positive decision. 
%%%%%. 

\paragraph{Method 1}\label{method-1} Given $z$ and $s$,   if $\hat \pr[E|Z=z,S=s]$ is unimodal and has a large probability mass (say, 50\% or more)  on its mode, 
then we can safely set $\hat E$ to be that mode. 
Namely, if 
$\max_e  \hat \pr[E=e|Z=z,S=s]\geq 0.5$ then we  set $\hat e = \argmax_e  \hat \pr[E=e|Z=z,S=s]$, and we 
can then use $\hat e$ directly to set $\hat{Y}_{\hat{E}}=1$ or $\hat{Y}_{\hat{E}}=0$ in those samples with  $Z=z$ and $S=s$, depending on whether $\hat e \geq \tau$ or not, respectively.
Our experimental results show that this method gives a good accuracy.

\paragraph{Method 2}\label{method-2}  
If $\hat \pr[E|Z=z,S=s]$ is dispersed on several values, so that no value is strongly predominant, then it is  impossible to estimate individual values for $E$ with high accuracy. However, we can still accurately estimate $Y_E$  as follows:  
Let $\sigma_0 = \sum_{e<\tau} \hat \pr[E=e|Z=z,S=s]$  and $\sigma_1= \sum_{e\geq\tau} \hat \pr[E=e|Z=z,S=s]$. If $\sigma_0 < \sigma_1$, then we set  $\hat{Y}_{\hat{E}}=1$; otherwise, $\hat{Y}_{\hat{E}}=0$. 
% Formally: 
% \begin{align*}
% \textstyle
%     &\mbox{if}\quad 
%     \sum_{e\geq \tau} \hat \pr[E=e|Z=z,S=s] \geq 0.5 \\[-1ex]
%     &\quad \mbox{then set}\quad \hat Y|Z=z,S=s =1, \\[-1ex]
%     &\quad\mbox{else set}\quad \hat Y|Z=z,S=s =0. 
% \end{align*}

% We will give an example of the application of this method in Section~\ref{sec:experiments}. 

\section{Experiments}\label{sec:experiments}

In this section, we test BaBE on scenarios corresponding to Examples~\ref{exa:bias} and \ref{exa:bias2},  using  synthetic data sets and a real data set respectively. 
%(the description of the data sets is in Section~\ref{sub:data}). 
We compare our results with those achieved by the following well-known pre-processing approaches that aim to satisfy statistical parity, as well as machine learning algorithms trained on the data set where $E$ is observable.

\subsection{Metrics}
 We will use the following metrics to measure fairness: Statistical parity difference ($SPD$, Equation~\ref{eq:SPD}), Conditional statistical parity difference ($CSPD$, Equation~\ref{eq:CSPD}), Equal opportunity difference ($EOD$, Equation~\ref{eq:EOD}).
 The performance is measured by accuracy ($\mathit{Acc}(\hat{Y}, Y)$, Equation~\ref{eq:Acc}), distortion ($\mathit{Dist}(\hat{E}, E)$, Equation~\ref{eq:Dist}), and the Wasserstein distance between the true and estimated distributions ($\mathcal W (\mu,\nu)$, Equation~\ref{eq:Wass}).

\subsection{Other Algorithms for Comparison}

The first approach we compare with is the \emph{disparate impact} (DI)  \emph{remover} ~\cite{bellamy2019ai,Feldman}.\footnote{We use the implementation by \cite{bellamy2019ai}.}
% This algorithm works on selected features to equalize their distributions for the groups in $S$. 
DI has a parameter $\lambda$, which represents the minimum allowed ratio between the probability of success ($\hat{Y}=1$) of each group (hence $\lambda=1$ corresponds to statistical parity). For the experiments, we use $\lambda=0.8$. 

The second algorithm we compare with ours is the \emph{naive Bayes} (NB)~\cite{calders_nbayes}.\footnote{Implementation kindly provided by the authors of \cite{calders_nbayes}.} NB also 
% considers prior knowledge and 
applies the EM method; however, in contrast to our work, NB assumes that $E$ and  $S$ are independent, and uses EM to take decisions  that optimize the trade-off between SPD and accuracy. 
% Their method pre-processes binary labels rather than multi-valued features, therefore  we can only compare it to our approach after obtaining the decisions $\hat{Y}$ based on the selected threshold in the pre-processed variable $\hat{E}$.

Finally, we compare the performance of BaBE with the ML methods that are trained on the data where $E$ is observed (the source data). The model is then used to predict $E$ (from $S$ and $Z$) in the data sets where the distribution of $E|S$ is different from the source data, but the learned mechanism ($Z|E,S$) is the same. We used linear regression (LG) and decision tree regression (DT) for the experiments.\footnote{We use scikit-learn implementations of the machine learning algorithms~\cite{pedregosa2011scikit}.  }

%  We measure the performance of the prediction $\hat{Y}$ of  $Y$ with the accuracy~\footnote{We use the ScikitLearn implementation~\cite{pedregosa2011scikit}.}, defined by the formula 
%  \begin{equation}
%      {\mathit Acc} \ = \  \frac{TP+TN}{TP+TN+FP+FN}       
%  \end{equation}
% where $TP$, $TN$,$FP$ and $FN$ represent the True Positives, True Negatives, False Positives and False Negatives, respectively. 
%  To evaluate the estimation of $E|S$ we use both the accuracy and the Wasserstein distance . The latter gives us better judgement of the performance, because the proximity to the real value is also taken into account. For example, the accuracy would show the same results if for the real value 2, the predicted value is 4 or 50. Instead, the Wasserstein distance would favor the closer numerical value.
% To evaluate the fairness of the data, we use conditional statistical disparity  with respect to the true $E$. We also measure statistical disparity  and accuracy difference between the groups: $Acc_{Difference}=Acc_{S=0}-Acc_{S=1}$.

% Definition for the Manhattan Distance:

% $$\mathbb{E}_E d(E, \hat{E}) = \sum_e p(e) |p(e)-\hat{\pr}(e) |$$

% \subsubsection{Data}
% \label{sub:data}

\subsection{Synthetic data sets with distributions shifts}

This group of experiments is aimed at testing how BaBE copes with the transfer of knowledge to populations with different distributions. For this purpose, we generate a synthetic set, that we call "source data", 
where the mean of $E$ for group $0$, $\mathit{mean0}$, is $40$ and the mean of $E$ for group $1$, $\mathit{mean1}$, is $80$. The groups in this data set are about even in size. We use this set of ``source data'' to estimate the distributions $\pr[Z|E,S]$. 

Then, we generate three different 
data sets $1$, $2$ and $3$ where $\mathit{mean1}$ is still $80$, while $\mathit{mean0}$ varies from $40$ to $80$, representing a distribution shift, w.r.t. the source data, on the $E$ for group $0$. Varying $\mathit{mean0}$ will also allow us to validate the claim that our method works well regardless of  $E$ being independent of $S$ or not. 
The percentage of the two groups in these new data sets also changes: we have set the group $1$ to be $60$\% of the population, and, consequently, the group $0$ to be $40$\%. 

\subsubsection{Generation of the synthetic data sets}
In this section we explain how to generate various data sets containing tuples of the form $(s_i, e_i, z_i, y_i)$.
First, we generate a data set of $30$K  elements 
$\{s_i\}_{i\in [1,30{\text K}]}$ representing  values for the sensitive variable (group) $S$, where each $s_i$ is sampled
from the Bernoulli distribution $\mathcal{B}(0.5)$. This means that the two groups are about even. Then, we set the domain of $E$ to be
equal to $[0,99]$, and to each of the elements $s_i$ in the sequence we associate a value $e_i$  for the variable $E$, sampled from the normal distribution $\mathcal{N}(\mathit{mean0}, sd)$, if $s_i=0$,  and from $\mathcal{N}(\mathit{mean1}, sd)$, if $s_i=1$,\footnote{To keep the samples in the range of $E$, we re-sample the values that are lower than $0$ or higher than $99$. We also discretize them by rounding to the nearest integer.} 
where the mean $\mathit{mean1}$ is set to $80$, and the standard deviation $sd$ is set to $30$. 
On the other hand, the value of $\mathit{mean0}$, is $40$ in the source data, and varies from $40$ to $80$ in the data sets $1$, $2$ and $3$.
Finally, to each pair $(s_i,e_i)$ we associate a value $z_i$ with $Z$ by applying a bias to $e_i$. More precisely, 
 $z_i = 100\times sigmoid(e_i/10-5) - 100\times sigmoid(e_i/10-5)\times 0.2$ + $\varepsilon$ for $S=0$ and $z_i = 100\times sigmoid(e_i/10-5) + 100\times sigmoid(e_i/10-5)\times 0.02$ + $\varepsilon$ for $S=1$, where $\varepsilon$  is a noise term sampled from $\mathcal{N}(1, 0.05)$. The threshold for the decision is $E = 80$, namely: $Y=1$ if $E> 80$ and $Y=0$ otherwise. This is used to associate a decision $y_i$ to each tuple $(s_i,e_i,z_i)$. The distribution of $E|S$  for the source data and the data sets $1$-$3$ are shown in Figure~\ref{fig:Dist2-source}.

 \begin{figure}[t]
    \centering
    \includegraphics[width=\linewidth]{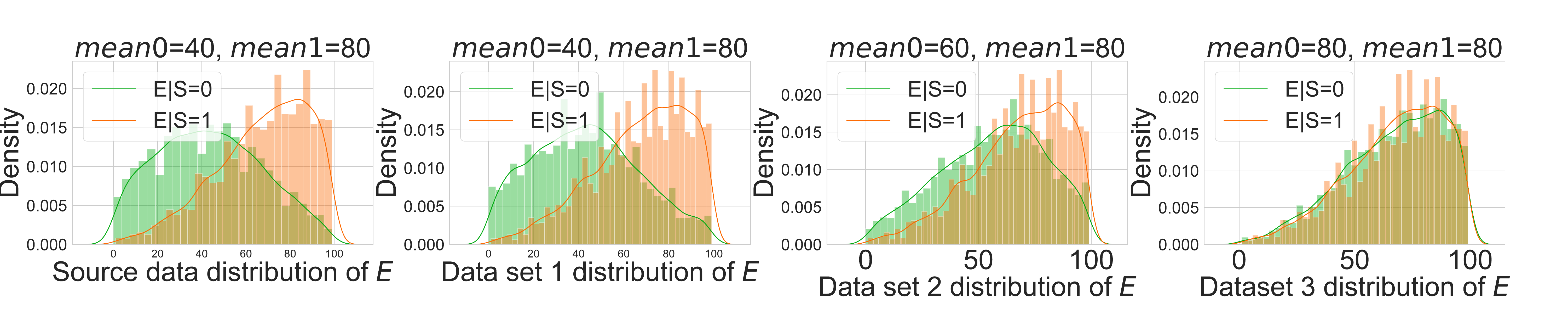}
     \vspace{-5mm}
     \caption{The distribution of $E|S$ in the source data and  in the new populations. }
    \label{fig:Dist2-source}
\end{figure}

\subsubsection{Application of BaBE}
We use the source data to estimate $\pr[Z|E,S]$. This conditional probability is then used  to estimate $\pr[E|S]$ in the other data sets, where it is different from the "source data", in the following way:
We take a random subset of the data from the data sets $1$-$3$ ($80\%$), remove the $E$ values from them, and use them to compute the empirical distribution $\pr[Z|S]$ and to produce the estimate
$\hat \pr[E|S]$ by applying our BaBE method.

We verify that these data sets satisfy the conditions for  Method 1 (cf. Section ~\ref{method-1}),
%of Section~\ref{sec:Y from E}
and we apply this method to the remaining ($20\%$) of the data (testing data sets) to infer the values of $\hat{E}$ and $\hat{Y}_{\hat{E}}$ for each sample. We compare our estimates to the true values of $E$ and $Y$ in the testing data. 
% For example, if for the value of $Z=z$ and $S=s$ the highest probability is to observe $E=e$, we change the existing $z$ to $e$ for all the records with values $z$ and $s$. 

% For Figures 2,3,4 the calculation is done by computing the distances separately for each group in $S$, and then averaging. This is done to avoid the compensation effect, since the bias is negative for group $0$ and positive for group $1$. 

% \begin{figure}[H]
%     \centering
%     \includegraphics[width=\linewidth]{Graphs/Experiment3/e_given_s2.pdf}
%     \caption{The distribution of $E|S$ in data sets 2 and 3.}
%     \label{fig:my_label1}
% \end{figure}

% \begin{figure}[H]
%     \centering
%     \includegraphics[width=\linewidth]{Graphs/Experiment3/e_given_s3.pdf}
%     \caption{The distribution of $E|S$ in data sets 4 and 5.}
%     \label{fig:my_label1}
% \end{figure}

% \subsubsection{Results on synthetic data sets with distribution shift}

Figure~\ref{fig:Wass2}  shows the Wasserstein distances between the true distributions and the estimated ones.  As we can see, BaBE manages to estimate $E$ quite well: the distance w.r.t. $E$ is very small.                                                    
                                        
% \begin{figure}[H]
%     \centering
%     \includegraphics[width=\linewidth]{Babe/Graphs/Experiment3/3_w1.pdf}
%     \caption{The Wasserstein distance between $\hat{\pr}[Z|S=1]$ and $\pr[E|S=1]$
%     and between $\hat{\pr}[E|S=1]$ and $\pr[E|S=1]$.}
%     \label{fig:Wass2-1}
% \end{figure}

% \begin{figure}[H]
%     \centering
%     \includegraphics[width=\linewidth]{Babe/Graphs/Experiment3/3_w0.pdf}
%     \caption{The Wasserstein distance between $\hat{\pr}[Z|S=0]$ and $\pr[E|S=0]$
%     and between $\hat{\pr}[E|S=0]$ and $\pr[E|S=0]$.}
%     \label{fig:Wass2-0}

% \end{figure}

\begin{figure}[h]
    \centering
    \includegraphics[width=0.8\linewidth]{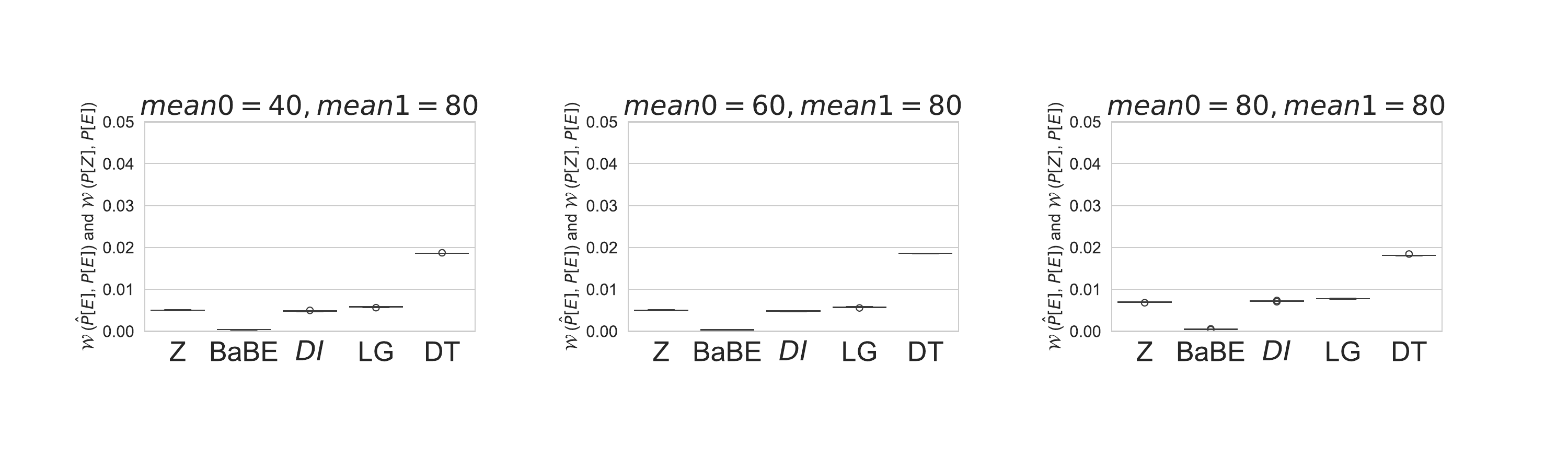}
     \vspace{-5mm}
     \caption{Experiments on the synthetic data sets: The Wasserstein distance between $\hat{\pr}[Z]$ and $\pr[E]$
    and between $\hat{\pr}[E]$ and $\pr[E]$. }
    \label{fig:Wass2}
\end{figure}

% \begin{figure}[t]
%     \centering
%     \includegraphics[width=\linewidth]{Graphs/Experiment3/3_dist_e1.pdf}
%     \caption{ The distortion between ${Z}|S=1$ and $E|S=1$ (for $Z$), and between $\hat{E}|S=1$ and ${E}|S=1$ (for BaBE and DI)}

% \end{figure}

% \begin{figure}[t]
%     \centering
%     \includegraphics[width=\linewidth]{Graphs/Experiment3/3_dist_e0.pdf}
%     \caption{ The distortion between ${Z}|S=0$ and $E|S=0$ (for $Z$), and between $\hat{E}|S=0$ and ${E}|S=0$ (for BaBE and DI)}
Figure~\ref{fig:ex3_accE} shows the accuracy with respect to the true $E$ (discretized values). BaBE is able to achieve a much better accuracy than other methods. DT algorithm is the second best performer; however DT is still performing worse than BaBE, despite being trained on the data set where $E$ is observable.
% \end{figure}
\begin{figure}[h]
    \centering
    \includegraphics[width=0.8\linewidth]{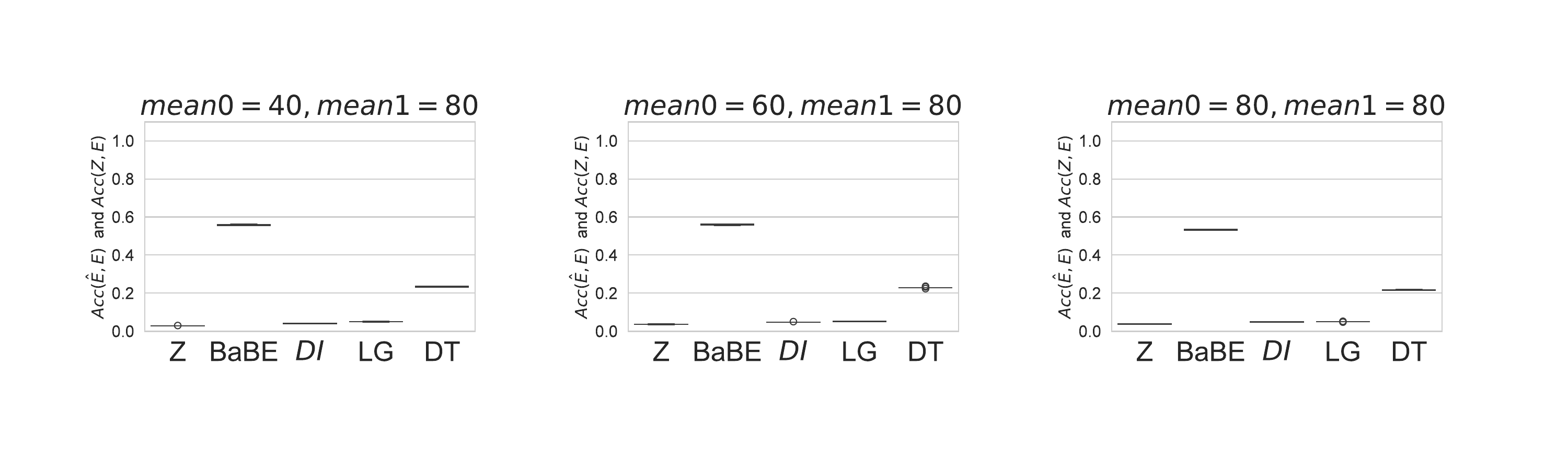}
     \vspace{-5mm}
     \caption{Experiments on the synthetic data sets: The accuracy between $Z$ and $E$,  and between $\hat{E}$ and $E$. }
\label{fig:ex3_accE}
\end{figure}

% \begin{figure}[t]
%     \centering
%     \includegraphics[width=\linewidth]{Graphs/Experiment3/3_dist_e.pdf}
%     \caption{The distortion between $Z$ and $E$ (for $Z$), and between $\hat{E}$ and ${E}$ (for BaBE and DI)}
%     % \label{fig:my_label1}
% \end{figure}

% \begin{figure}[H]
%     \centering
%     \includegraphics[width=\linewidth]{Babe/Graphs/Experiment3/3_acc_y1.pdf}
%     \caption{Experiments on the transfer of knowledge: The accuracy of $\hat{Y}_{Z}|S=1$ and $\hat{Y}_{\hat{E}}|S=1$ w.r.t. $Y_E|S=1$.  }
%     \label{fig:Acc2-1}
% \end{figure}

% \begin{figure}[H]
%     \centering
%     \includegraphics[width=\linewidth]{Babe/Graphs/Experiment3/3_acc_y0.pdf}
%     \caption{Experiment on the transfer of knowledge: The accuracy of $\hat{Y}_{Z}|S=0$ and $\hat{Y}_{\hat{E}}|S=0$ w.r.t. $Y_E|S=0$. }
%  \label{fig:Acc2-0}
% \end{figure}

Figure~\ref{fig:Acc2} shows the accuracy with respect to $Y$ for the two groups. Once again the performance of BaBE is better than other pre-processing methods. The overall performance of all methods is better than measuring the accuracy with respect to $E$. This is not surprising, as achieving good accuracy in a binary setting is an easier task. DT achieves almost the same accuracy as BaBE on the dataset 1 ($mean0=40, mean1=80$) which has the same distribution of $E|S$ as the training data. However, the performance of DT decreases on the data sets where the distribution of $E|S$ is different from the training data ($mean0=60$ and $mean0=80$).

\begin{figure}[h]
    \centering
    \includegraphics[width=0.8\linewidth]{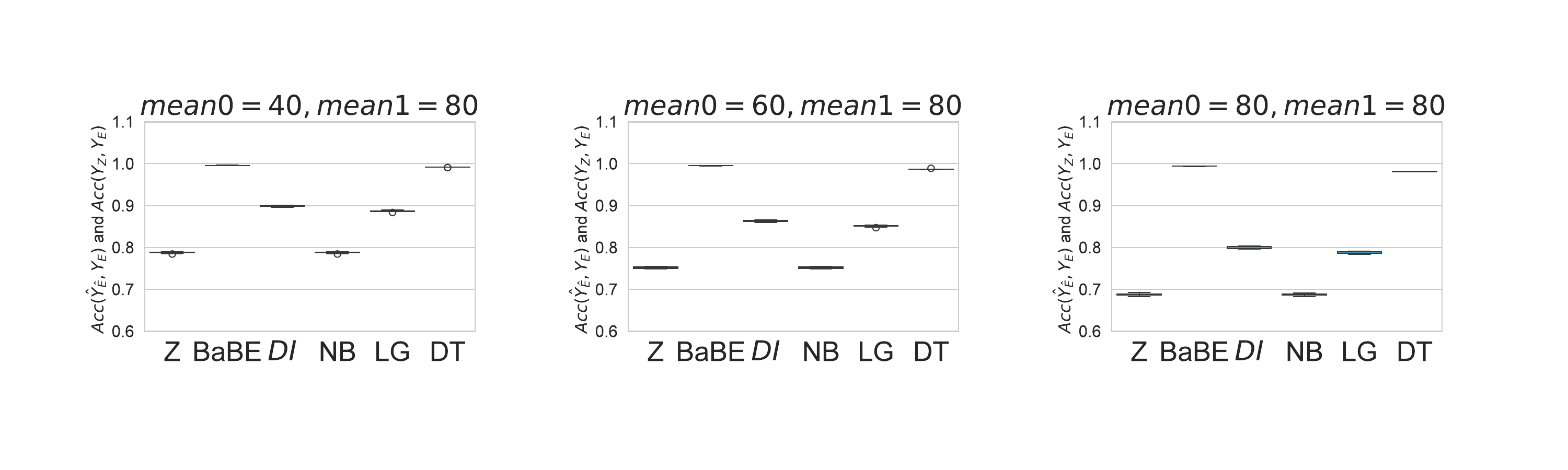}
     \vspace{-5mm}
     \caption{Experiments on the synthetic data sets: The accuracy between $\hat{Y}_{Z}$ and $Y_E$,  and between $\hat{Y}_{\hat{E}}$ and $Y_E$. }
\label{fig:Acc2}
\end{figure}

Figure~\ref{fig:ex2_diste} shows the distortion (Equation~\ref{eq:Dist}). BaBE again produces results that are closer to the true values of $E$ than the ones produced by other methods.

% \begin{figure}[H]
%     \centering
%     \includegraphics[width=\linewidth]{Babe/Graphs/Experiment3/3_dist_e1.pdf}
%     \caption{ The distortion for $S=1$.}
% \label{fig:ex2_dist1}
% \end{figure}

% \begin{figure}[H]
%     \centering
%     \includegraphics[width=\linewidth]{Babe/Graphs/Experiment3/3_dist_e0.pdf}
%     \caption{ The distortion for $S=0$.}
% \label{fig:ex2_dist0}
% \end{figure}

\begin{figure}[h]
    \centering
    \includegraphics[width=0.8\linewidth]{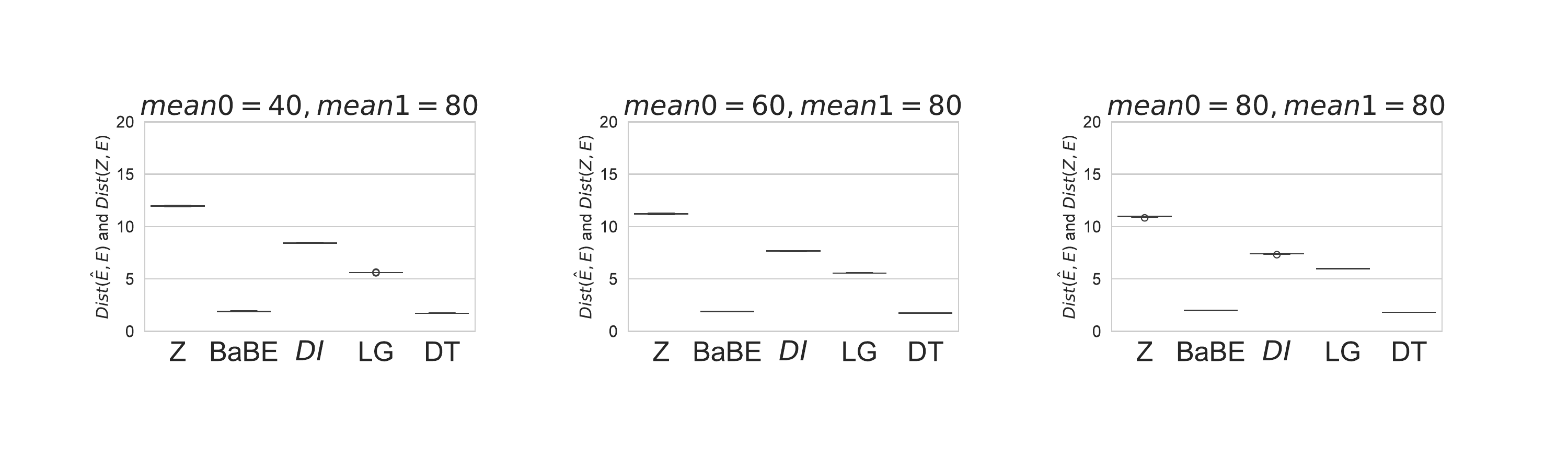}
     \vspace{-5mm}
     \caption{ Experiments on the synthetic data sets: The distortion between $Z$ and $E$ and between  $\hat{E}$ and $E$.}
\label{fig:ex2_diste}
\end{figure}

% \begin{figure*}[t]
%     \centering
%     \includegraphics[width=\linewidth]{Graphs/Experiment3/3_sp1.pdf}
%     \caption{ $\pr[ {Y}_{Z}=1|S=1]$ (for $Z$) and  $\pr[\hat{Y}_{\hat{E}}=1|S=1]$. }

% \end{figure*}

% \begin{figure*}[t]
%     \centering
%     \includegraphics[width=\linewidth]{Graphs/Experiment3/3_sp0.pdf}
%     \caption{ $\pr[ {Y}_{Z}=1|S=0]$ (for $Z$) and  $\pr[\hat{Y}_{\hat{E}}=1|S=0]$.  }

% \end{figure*}

% \begin{figure*}[t]
%     \centering
%     \includegraphics[width=\linewidth]{Graphs/Experiment3/3_spd.pdf}
%     \caption{Statistical Parity Difference (SPD). For  BaBE, DI and NB,  SPD is defined as $ \pr[\hat{Y}_{\hat{E}}=1|S=1] - \pr[\hat{Y}_{\hat{E}}=1|S=0]$. For $Z$,  the definition is similar, with $\hat{Y}_{\hat{E}}$ replaced by $Y_Z$}

% \end{figure*}

Figure~\ref{fig:Cond2} shows the conditional statistical parity difference on admission for each group, conditioned on $E$. The values for BaBE are close to zero, indicating the absence of discrimination. The DI method has decreased the discrimination with respect to $Z$. NB results are worse than the initial discrimination: it is possible that the non linear bias function together with the accuracy constraints inbuilt in the algorithm has impeded its performance. DT once again is a second best performer after BaBE.

% \begin{figure}[H]
%     \centering
%     \includegraphics[width=\linewidth]{Babe/Graphs/Experiment3/3_csp11.pdf}
%     \caption{ Experiment on the transfer of knowledge: $\pr[Y_Z=1|E=55,S=1]$ and $\pr[\hat{Y}_{\hat{E}}=1|E=55,S=1]$. }
% \label{fig:Cond2-1}
% \end{figure}

% \begin{figure}[H]
%     \centering
%     \includegraphics[width=\linewidth]{Babe/Graphs/Experiment3/3_csp01.pdf}
%     \caption{Experiment on the transfer of knowledge: $\pr[Y_Z=1|E=55,S=0]$ and $\pr[\hat{Y}_{\hat{E}}=1|E=55,S=0]$. }
% \label{fig:Cond2-0}
% \end{figure}

\begin{figure}[h]
    \centering
    \includegraphics[width=0.8\linewidth]{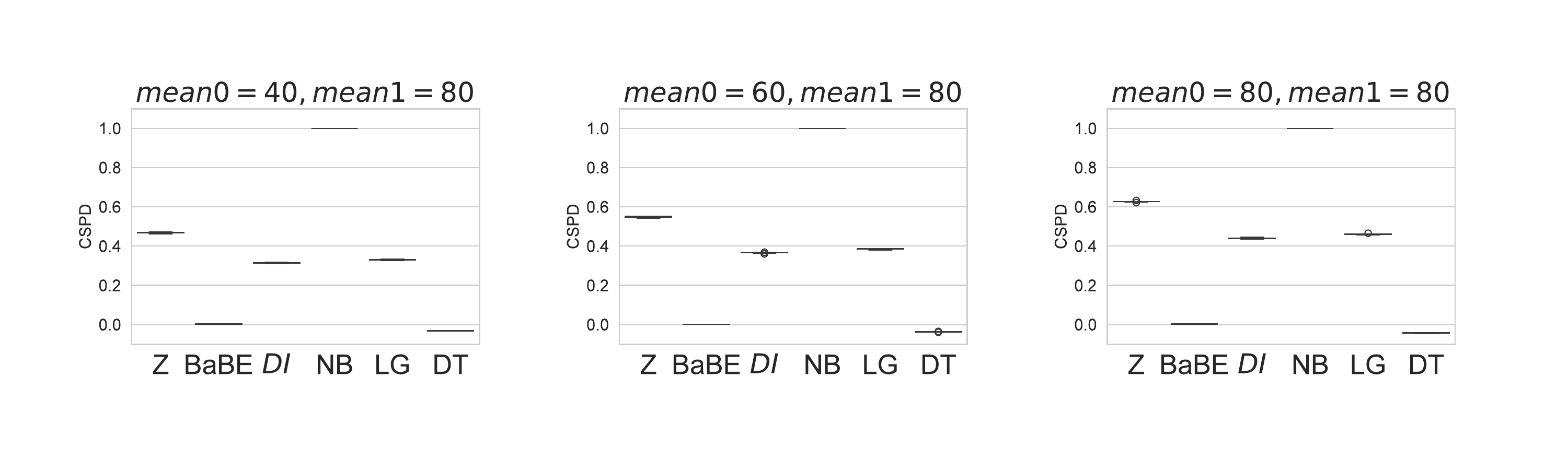}
     \vspace{-5mm}
     \caption{Experiments on the synthetic data sets: Conditional Statistical Parity Difference (CSPD). We recall that, for  BaBE, DI, NB, LG and  DT, the CSPD is defined as $\pr[\hat{Y}_{\hat{E}}=1|E,S=1]-\pr[\hat{Y}_{\hat{E}}=1|E,S=0]$. For $Z$,  the definition is similar, with $\hat{Y}_{\hat{E}}$ replaced by $Y_Z$.  }
\label{fig:Cond2}
\end{figure}

%%%%%%%%%%%%%%%%%%%%
 Figure~\ref{fig:EOD2bis} shows the probabilities of positive prediction when the true decision is positive, and the corresponding difference in equal opportunity. We note that the prediction based on $Z$ has a high probability to be positive for the group $1$, but not for the group $0$, therefore EOD for $Z$ is close to $1$. On the other hand, BaBE's prediction is based on the estimation of $E$, and hence tends to be equal to the true decision yielding EOD close to zero. Quite surprisingly, DI gives bad results, even though it is supposed to equalize the distributions for $S=0$ and $S=1$. However, DI decreases the mean for $S=1$ instead of increasing the mean for $S=0$, leaving the values for $S=0$ below the positive decision threshold ($80$). Similar considerations apply to NB and LG.

% \begin{figure}[H]
%     \centering
%     \includegraphics[width=\linewidth]{Babe/Graphs/Experiment3/3_eo1.pdf}
%     \caption{ Experiment on the transfer of knowledge:   $\pr[{Y}_{Z}=1|Y_E=1,S=1]$ and $\pr[\hat{Y}_{\hat{E}}=1|Y_E=1,S=1]$.  } 
%  \label{fig:EOD2-1}
% \end{figure}

% \begin{figure}[H]
%     \centering
%     \includegraphics[width=\linewidth]{Babe/Graphs/Experiment3/3_eo0.pdf}
%     \caption{Experiment on the transfer of knowledge:    $\pr[{Y}_{Z}=1|Y_E=1,S=0]$ and $\pr[\hat{Y}_{\hat{E}}=1|Y_E=1,S=0]$. }
% \label{fig:EOD2-0}
% \end{figure}

\begin{figure}[h]
    \centering
    \includegraphics[width=0.8\linewidth]{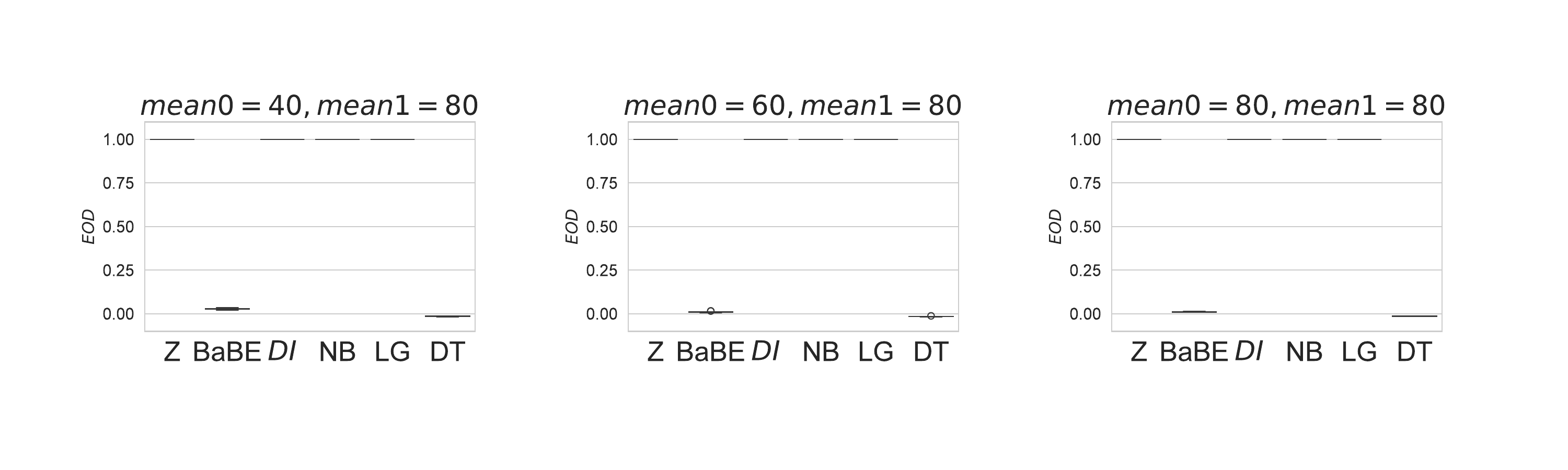}
     \vspace{-5mm}
    \caption{Experiments on the synthetic data sets:   Equal Opportunity Difference (EOD).
    We recall that, for  BaBE, DI, NB, LG and  DT, the EOD is defined as $ \pr[\hat{Y}_{\hat{E}}=1| Y_E = 1, S=1] - \pr[\hat{Y}_{\hat{E}}=1|Y_E = 1, S=0]$. For $Z$,  the definition is similar, with $\hat{Y}_{\hat{E}}$ replaced by $Y_Z$.}
  \label{fig:EOD2bis}
\end{figure}

Finally, Figure~\ref{fig:SPD3} compares the statistical difference (SPD) of the prediction $\hat{Y}_{\hat{E}}$  obtained with the various methods and the SPD of ${Y}_Z$. The SPD for $\hat{Y}_{\hat{E}}$ is defined in \eqref{eq:SPD}, for ${Y}_Z$ is defined as $\pr[Y_Z=1|S=1] - \pr[Y_Z=1|S=0]$.
When  $\mathit{mean0}=80$, that is, the same as $\mathit{mean1}$, BaBE achieves, correctly, $SPD=0$. In contrast, DI and NB  do not achieve equal distribution for S=1 and S=0, which is surprising since the algorithms are geared towards equality. We hypothesize that the performance of the algorithms is impeded by the non linear bias function and in-built accuracy constraints. The DT algorithm is closest to the performance of BaBE, as it is more suitable to handle non-linearity in the data set than other ML model (LG).

% \begin{figure}[H]
%     \centering
%     \includegraphics[width=\linewidth]{Babe/Graphs/Experiment3/3_sp1.pdf}
%     \caption{ $\pr[ {Y}_{Z}=1|S=1]$ (for $Z$) and  $\pr[\hat{Y}_{\hat{E}}=1|S=1]$ (for BaBE, DI and NB)}
%  \label{fig:SP31}
% \end{figure}

% \begin{figure}[H]
%     \centering
%     \includegraphics[width=\linewidth]{Babe/Graphs/Experiment3/3_sp0.pdf}
%     \caption{$\pr[ {Y}_{Z}=1|S=0]$ (for $Z$) and  $\pr[\hat{Y}_{\hat{E}}=1|S=0]$ (for BaBE, DI and NB)}
%  \label{fig:SP30}
% \end{figure}

\begin{figure}[h]
    \centering
    \includegraphics[width=0.8\linewidth]{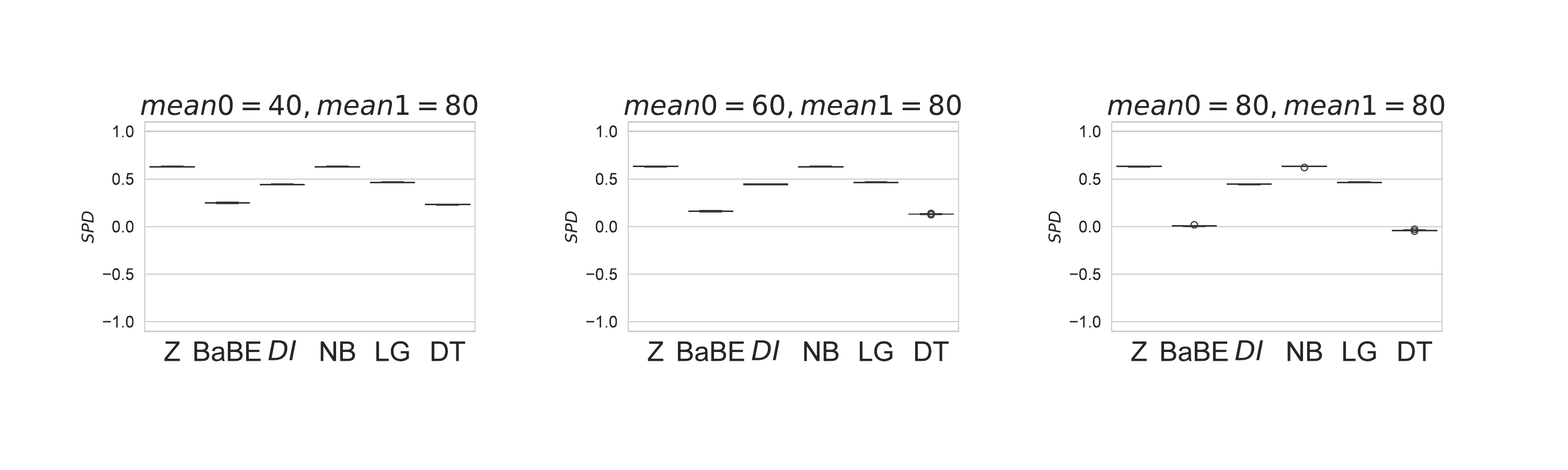}
    \vspace{-5mm}
    \caption{Experiments on the synthetic data sets: Statistical Parity Difference (SPD).
    We recall that, for  BaBE, DI, NB, LG and  DT, the SPD is defined as $ \pr[\hat{Y}_{\hat{E}}=1| S=1] - \pr[\hat{Y}_{\hat{E}}=1| S=0]$. For $Z$,  the definition is similar, with $\hat{Y}_{\hat{E}}$ replaced by $Y_Z$.}
    \label{fig:SPD3}
\end{figure}

% \subsubsection{Results on the NHANES data}
% \begin{figure}[H]
%     \centering
%     \includegraphics[width=\linewidth]{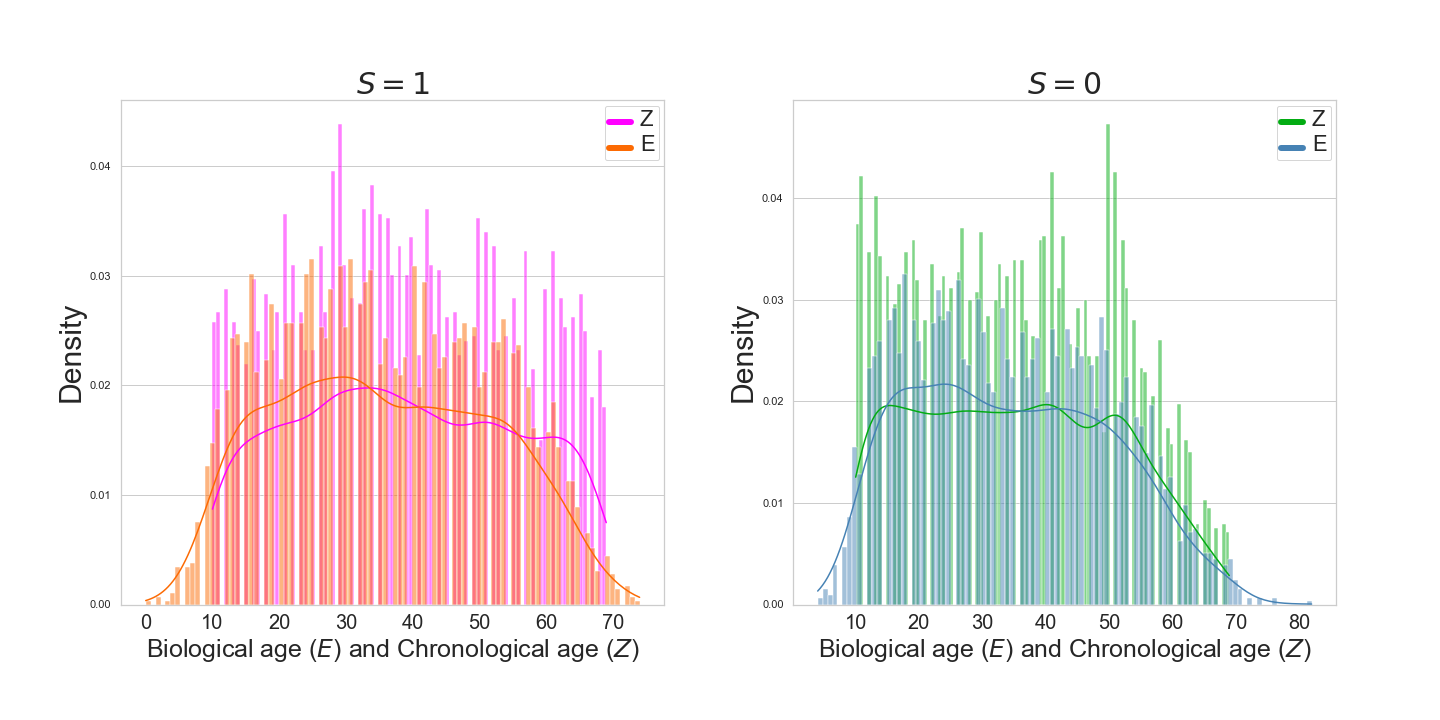}
%     \caption{Distributions of $E$ and $Z$ for $S=1$ (left) and $S=0$ (right) in NHANES data set.}
%   \label{fig:nhanesdist}
% \end{figure}

\subsection{The real-world data set}
The National Health and Nutrition Examination Survey (NHANES) \cite{NHANES} is a series of studies that are intended to evaluate the health and nutritional status of adults and children in the United States. The survey is unique in that it incorporates in-depth interviews and detailed physical examinations. Health-related questions and demographics are included in the NHANES interview.  For the survey, the sample was selected to represent the US population of all ages. To produce reliable statistics, NHANES oversamples individuals aged 60 and over, African Americans, and Hispanics.
NHANES is a popular source for studying biological aging~\cite{kwon2021toolkit,xu2023blunted,liu2023oxidative,nguyen2022biological}. The data set consists of 8243 samples. For our experiments, we use three variables from the data set, race (black or white), which is out $S$, chronological age (20-90), which is our $Z$, and an estimate of the biological age of the original KDM~\footnote{Klemera and Doubal's method for calculating the biological age from the set of biomarkers.} biological age (variable 'kdm0') which is our $E$. We choose chronological or biological age $75$ or more as the threshold to set $Y_Z=1$ and $Y_E=1$. This age group shows the most racial disparity in biological aging in the NHANES data set.
%(Figure~\ref{fig:agevseod})
Additionally, it is a reasonable age to check for age-related diseases or consider retirement.

Experiments on the NHANES data are carried out using Method 2 (cf. Section~\ref{method-2}) of the BaBE method. This is
because the conditional distribution of $Z|E,S$ does not allow the accurate estimation of every individual $E$. However, it still allows us to recover the aggregated distribution and estimate $\hat{Y}_{\hat{E}}$. 
In the experiments we consider only the fairness metric EODS, because the statistical disparity in the NHANES data is very small (owing to the oversampling of the minority population), so SPD is not interesting, and CSPD is not relevant because we apply Method 2.  

% ~\footnote{We also report intermediate steps for EOD: $\pr[\hat{Y}_{\hat{E}}=1|Y_E=1,S=1]$ and $ \pr[Y_Z=1|Y_E=1,S=1]$, $ \pr[\hat{Y}_{\hat{E}}=|Y_E=1,S=0]$  and $ \pr[Y_Z=1|Y_E=1,S=0]$} and  $\mathit{Acc}(\hat{Y}_{\hat{E}}, Y_E)$, $\mathit{Acc}(Y_{Z}, Y_E)$~\footnote{In addition we report intermittent steps to obtain accuracy measure: $\mathit{Acc}(\hat{Y}_{\hat{E}}|S=1, Y_E|S=1)$ and $\mathit{Acc}(Y_Z|S=1, Y_E|S=1)$, $\mathit{Acc}(\hat{Y}_{\hat{E}}|S=1, Y_E|S=0)$ and $\mathit{Acc}(Y_Z|S=1, Y_E|S=0)$}.

% We evaluated various metrics for the precision of the estimations and fairness and compared the performance of BaBE with disparate impact remover (DI) and with naive Bayes (NB).
% The results are illustrated in Figures \ref{fig:WD1} through \ref{fig:EOD1}. 
The boxplots are obtained by repeating the experiments ten times with the same parameters. We report the results for the values of $mean0$ equal to $40$, $60$ and $80$.

% \begin{figure}[H]
%     \centering
%     \includegraphics[width=0.5\linewidth]{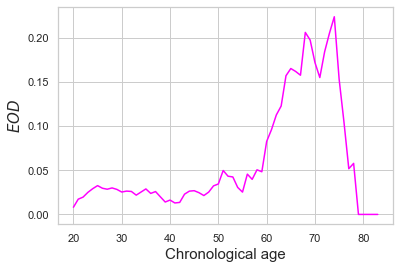}
%     \caption{The graph shows the equal opportunity difference (EOD) between the $Y_E$ and $Y_Z$, when different thresholds for $Z$ (chronological age) are selected. The disparity is largest around the chronological age equal 75 years. }
%     \label{fig:agevseod}
% \end{figure}

% We perform the experiments on the first 40 samples applying the BaBE, the Disparate Impact Remover, and the Naive Bayes Pre-processing algorithms. 

% In Figure~\ref{fig:nhanes_dist} we report the distortion on $\hat{E}$ estimated by BaBE for each group. As we can see, it is very close to the true values of $E$. 

 Figure~\ref{fig:nhanes_ACC} shows the accuracy resulting from the application of BaBE, DI, and NB to the NHANES data set. As we can see, BaBE achieves better overall accuracy and significantly better accuracy for $S=1$.

 Figure~\ref{fig:nhanes_EOD} shows the equal opportunity from the application of BaBE, DI, and NB to the NHANES data set. BaBE achieves EOD close to zero. DI and NB preprocessing methods do not differ significantly from the estimated $EOD$ considering the original $Z$.

% \begin{figure}[h]\
%     \centering
%     \includegraphics[width=0.6\linewidth]{Babe/Graphs/bio_age/bio_dist_e.pdf}
%     \caption{Experiments on the NHANES data. Distortion.}
%     \label{fig:nhanes_dist}
% \end{figure}

\begin{figure}[h]
    \centering
    \includegraphics[width=0.8\linewidth]{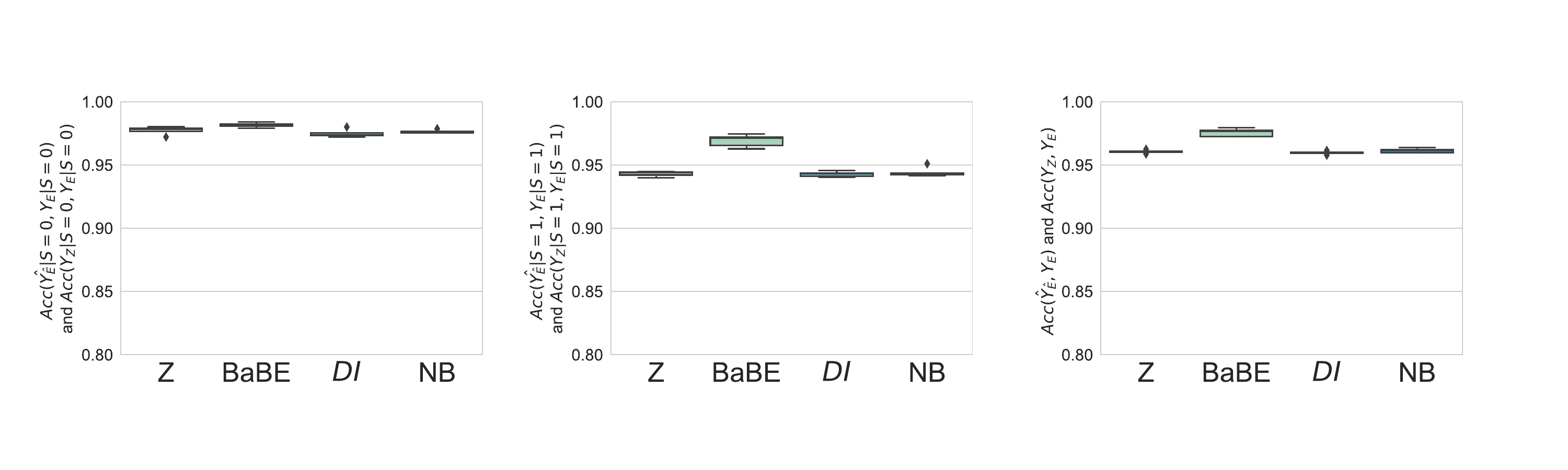}
     \vspace{-5mm}
     \caption{Experiments on the NHANES data. The Accuracy  for the two groups separately, and overall.}
 \label{fig:nhanes_ACC}
\end{figure}

% \begin{figure}[H]
%     \centering
%     \includegraphics[width=0.6\linewidth]{Babe/Graphs/bio_age/bio_spd.pdf}
%     \caption{Experiments on the NHANES data. $SPD$.}
%     \label{fig:nhanes_SPD}
% \end{figure}

% \begin{figure}[H]
%     \centering
%     \includegraphics[width=0.6\linewidth]{Babe/Graphs/bio_age/bio_cspe.pdf}
%     \caption{Experiments on the NHANES data. $CSPD$.}
%     \label{fig:nhanes_CSP}
% \end{figure}

\begin{figure}[h]
    \centering
    \includegraphics[width=0.8\linewidth]{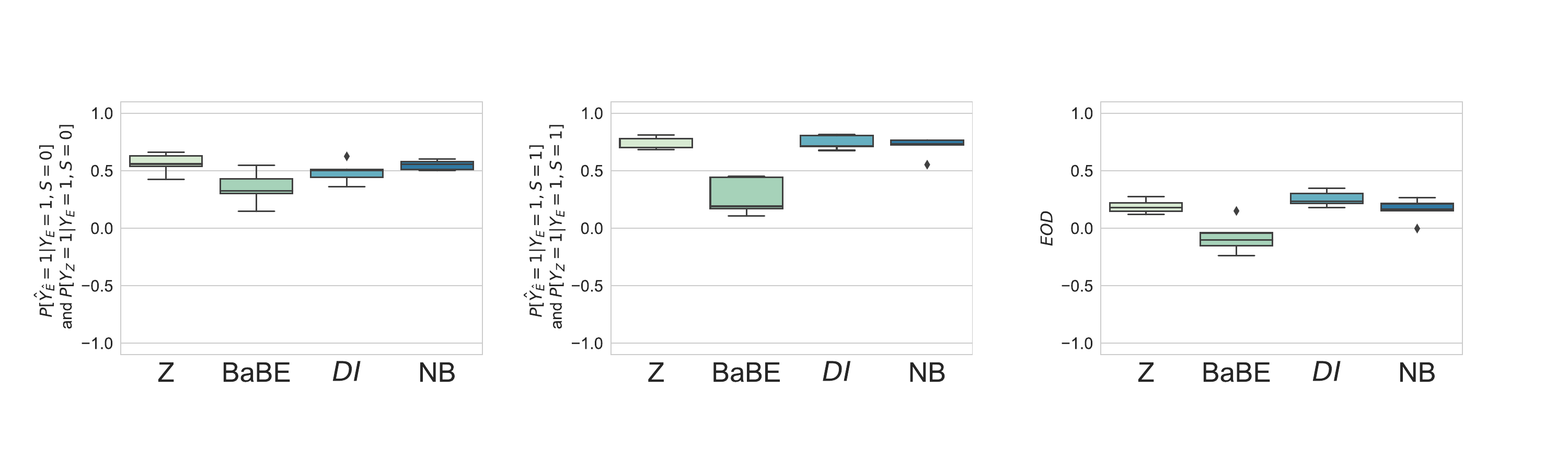}
     \vspace{-5mm}
     \caption{Experiments on the NHANES data. Equal opportunity difference (EOD), for the two groups separately, and overall.}
    \label{fig:nhanes_EOD}
\end{figure}

% \begin{figure}[H]
%     \centering
%     \includegraphics[width=0.6\linewidth]{Babe/Graphs/bio_age/bio_w.pdf}
%     \caption{Experiments on the NHANES data. The Wasserstein distance.}
%     \label{fig:nhanes_W}
% \end{figure}

\subsection{Discussion}

Our experiments show that BaBE performs well for the fairness notions for which BaBE is designed, i.e., CSPD  and EOD, while maintaining good accuracy. 

BaBE performs well also when $\pr[E|S]$   is different from that of the data in which $\pr[Z|E,S]$ has been computed (Figure~\ref{fig:Wass2}), which shows that BaBE is compatible with the transfer of causal knowledge to populations with different distributions. 
On the contrary, DI and NB highly depend on the distribution as they always aim to minimize SPD. Note that minimizing SPD in the NHANES data set would still result in discrimination against black people, who on average have higher biological age than white people of the same chronological age. 

The results of machine learning algorithms LG and DT show the sensitivity to the change in distribution of $E|S$. For example, the accuracy with respect to $Y_E$ of LG and DT is highest on the data set 1, where $E|S$ is the same as in the training data ($mean0=40, mean1=80$) and degrades in the data sets 2 and 3, where it is different. In addition, LG performs worse the DT in all experiments. This is expected, because the relationship between $Z$ and $E$ is non linear. We note that BaBE is able to recover $\hat{E}$ without restrictions on the functional relationship in the data set. We acknowledge, that it is possible that a more complex machine learning algorithm could perform better on the proposed data set than linear regression, however it would imply higher computational cost. Moreover, it might still be affected by the distribution shift~\cite{ovadia2019can}. 
% This is an important result, because the estimation of the proportion is itself error-prone, as the initial estimation by the authors was $47\%$, and 
% %(we assume, that $E$ is not observed). 
% it was later corrected in the official repository~\footnote{Public Release Data and Code \url{https://gitlab.com/labsysmed/dissecting-bias/-/blob/master/README.md}}. 

It is important to mention that the performance of BaBE is dependent on the
invertibility of $\pr[Z|E,S=s]$ (seen as stochastic matrix, aka bias matrix), because invertibility is necessary for the uniqueness of the MLE. However, even when the matrix is not invertible, we are able to obtain favorable results.  Indeed, in all our experiments the bias matrices we produce from the synthetic data are not invertible, to mimic the more realistic scenarios. 
Preliminary experiments show that the diagonal deterministic matrix produces the highest precision for the estimation of distributions $\pr[E|S]$, and highest accuracy of the prediction $\hat{Y}_{\hat{E}}$. 
We leave a more systematic study on how precision and accuracy depend on  $\pr[Z|E,S=s]$ as a topic for future work.  
\section{Conclusions and Future work}

We have proposed BaBE, a framework to use knowledge of a biasing mechanism from domain-specific studies to perform data pre-processing, aiming at achieving Conditional Statistical Parity and Equal Opportunity when the explaining variable, and, consequently, the true decision, are not contained in the data. The BaBE algorithm uses the bias mechanism to estimate
the probability distributions of the explaining variable, and it performs equally well even when the population distributions are different from the ones in 
which the study of the bias was conducted. A distinguishing feature of our approach is that \emph{ we do not need to assume that the explaining variable is independent of the sensitive attribute}. 
One challenging direction for future work is to explore how the precision of the estimation, the accuracy of the prediction, and the fairness level depend on the form of the matrices $\hat \pr[E|Z,S]$, and how the latter depends on the matrices representing the external knowledge (i.e., the bias mechanism)  $\hat\pr[Z|E,S]$. 
% Another promising direction is exploring the application of BaBE for fairness-related distribution shift auditing.
%
% Moreover, we plan to collaborate with domain experts to define formal conditions for measuring the $\hat \pr[Z|E,S]$ matrices accurately. 

We trust our method to serve as a tool to enhance interdisciplinary collaboration between domain experts and ML Fairness practitioners. 

\begin{acks}

This work was supported by the European Research Council (ERC) project HYPATIA under the European Union’s Horizon $2020$ research and innovation programme, grant agreement $n. 835294$. The work of Ruta Binkyte was also supported by Bundesministeriums fur Bildung und Forschung (PriSyn), grant
$No. 16KISAO29K$.

\end{acks}

\bibliographystyle{ACM-Reference-Format}
\bibliography{main}

%%% -*-BibTeX-*-
%%% Do NOT edit. File created by BibTeX with style
%%% ACM-Reference-Format-Journals [18-Jan-2012].

\begin{thebibliography}{40}

%%% ====================================================================
%%% NOTE TO THE USER: you can override these defaults by providing
%%% customized versions of any of these macros before the \bibliography
%%% command.  Each of them MUST provide its own final punctuation,
%%% except for \shownote{}, \showDOI{}, and \showURL{}.  The latter two
%%% do not use final punctuation, in order to avoid confusing it with
%%% the Web address.
%%%
%%% To suppress output of a particular field, define its macro to expand
%%% to an empty string, or better, \unskip, like this:
%%%
%%% \newcommand{\showDOI}[1]{\unskip}   % LaTeX syntax
%%%
%%% \def \showDOI #1{\unskip}           % plain TeX syntax
%%%
%%% ====================================================================

\ifx \showCODEN    \undefined \def \showCODEN     #1{\unskip}     \fi
\ifx \showDOI      \undefined \def \showDOI       #1{#1}\fi
\ifx \showISBNx    \undefined \def \showISBNx     #1{\unskip}     \fi
\ifx \showISBNxiii \undefined \def \showISBNxiii  #1{\unskip}     \fi
\ifx \showISSN     \undefined \def \showISSN      #1{\unskip}     \fi
\ifx \showLCCN     \undefined \def \showLCCN      #1{\unskip}     \fi
\ifx \shownote     \undefined \def \shownote      #1{#1}          \fi
\ifx \showarticletitle \undefined \def \showarticletitle #1{#1}   \fi
\ifx \showURL      \undefined \def \showURL       {\relax}        \fi
% The following commands are used for tagged output and should be
% invisible to TeX
\providecommand\bibfield[2]{#2}
\providecommand\bibinfo[2]{#2}
\providecommand\natexlab[1]{#1}
\providecommand\showeprint[2][]{arXiv:#2}

\bibitem[Agrawal and Aggarwal(2001)]%
        {agrawal_ibu}
\bibfield{author}{\bibinfo{person}{Dakshi Agrawal} {and} \bibinfo{person}{Charu~C. Aggarwal}.} \bibinfo{year}{2001}\natexlab{}.
\newblock \showarticletitle{On the Design and Quantification of Privacy Preserving Data Mining Algorithms}. In \bibinfo{booktitle}{\emph{Proceedings of the Twentieth ACM SIGMOD-SIGACT-SIGART Symposium on Principles of Database Systems}} (Santa Barbara, California, USA) \emph{(\bibinfo{series}{PODS '01})}. \bibinfo{publisher}{Association for Computing Machinery}, \bibinfo{address}{New York, NY, USA}, \bibinfo{pages}{247–255}.
\newblock
\showISBNx{1581133618}
\urldef\tempurl%
\url{https://doi.org/10.1145/375551.375602}
\showDOI{\tempurl}


\bibitem[Bareinboim and Pearl(2013a)]%
        {Bareinboim2013CausalTW}
\bibfield{author}{\bibinfo{person}{E. Bareinboim} {and} \bibinfo{person}{J. Pearl}.} \bibinfo{year}{2013}\natexlab{a}.
\newblock \showarticletitle{Causal Transportability with Limited Experiments}. In \bibinfo{booktitle}{\emph{AAAI}}.
\newblock


\bibitem[Bareinboim and Pearl(2013b)]%
        {Bareinboim_2013}
\bibfield{author}{\bibinfo{person}{Elias Bareinboim} {and} \bibinfo{person}{Judea Pearl}.} \bibinfo{year}{2013}\natexlab{b}.
\newblock \showarticletitle{A General Algorithm for Deciding Transportability of Experimental Results}.
\newblock \bibinfo{journal}{\emph{Journal of Causal Inference}} \bibinfo{volume}{1}, \bibinfo{number}{1} (\bibinfo{date}{may} \bibinfo{year}{2013}), \bibinfo{pages}{107--134}.
\newblock
\urldef\tempurl%
\url{https://doi.org/10.1515/jci-2012-0004}
\showDOI{\tempurl}


\bibitem[Bareinboim and Pearl(2014)]%
        {bareinboim2014transportability}
\bibfield{author}{\bibinfo{person}{Elias Bareinboim} {and} \bibinfo{person}{Judea Pearl}.} \bibinfo{year}{2014}\natexlab{}.
\newblock \showarticletitle{Transportability from multiple environments with limited experiments: Completeness results}.
\newblock \bibinfo{journal}{\emph{Advances in neural information processing systems}}  \bibinfo{volume}{27} (\bibinfo{year}{2014}).
\newblock


\bibitem[Bellamy et~al\mbox{.}(2019)]%
        {bellamy2019ai}
\bibfield{author}{\bibinfo{person}{Rachel~KE Bellamy}, \bibinfo{person}{Kuntal Dey}, \bibinfo{person}{Michael Hind}, \bibinfo{person}{Samuel~C Hoffman}, \bibinfo{person}{Stephanie Houde}, \bibinfo{person}{Kalapriya Kannan}, \bibinfo{person}{Pranay Lohia}, \bibinfo{person}{Jacquelyn Martino}, \bibinfo{person}{Sameep Mehta}, \bibinfo{person}{Aleksandra Mojsilovi{\'c}}, {et~al\mbox{.}}} \bibinfo{year}{2019}\natexlab{}.
\newblock \showarticletitle{AI Fairness 360: An extensible toolkit for detecting and mitigating algorithmic bias}.
\newblock \bibinfo{journal}{\emph{IBM Journal of Research and Development}} \bibinfo{volume}{63}, \bibinfo{number}{4/5} (\bibinfo{year}{2019}), \bibinfo{pages}{4--1}.
\newblock


\bibitem[Binkyte et~al\mbox{.}(2023)]%
        {binkyte2023babe}
\bibfield{author}{\bibinfo{person}{Ruta Binkyte}, \bibinfo{person}{Daniele Gorla}, {and} \bibinfo{person}{Catuscia Palamidessi}.} \bibinfo{year}{2023}\natexlab{}.
\newblock \showarticletitle{BaBE: Enhancing Fairness via Estimation of Latent Explaining Variables}.
\newblock \bibinfo{journal}{\emph{arXiv preprint arXiv:2307.02891}} (\bibinfo{year}{2023}).
\newblock


\bibitem[Calders and Verwer(2010)]%
        {calders_nbayes}
\bibfield{author}{\bibinfo{person}{Toon Calders} {and} \bibinfo{person}{Sicco Verwer}.} \bibinfo{year}{2010}\natexlab{}.
\newblock \showarticletitle{Three naive Bayes approaches for discrimination-free classification}.
\newblock \bibinfo{journal}{\emph{Data Min. Knowl. Discov.}}  \bibinfo{volume}{21} (\bibinfo{date}{09} \bibinfo{year}{2010}), \bibinfo{pages}{277--292}.
\newblock
\urldef\tempurl%
\url{https://doi.org/10.1007/s10618-010-0190-x}
\showDOI{\tempurl}


\bibitem[Chiappa(2019)]%
        {chiappa2019path}
\bibfield{author}{\bibinfo{person}{Silvia Chiappa}.} \bibinfo{year}{2019}\natexlab{}.
\newblock \showarticletitle{Path-specific counterfactual fairness}. In \bibinfo{booktitle}{\emph{Proceedings of the AAAI conference on artificial intelligence}}, Vol.~\bibinfo{volume}{33}. \bibinfo{pages}{7801--7808}.
\newblock


\bibitem[Choi et~al\mbox{.}(2020)]%
        {choi2021group}
\bibfield{author}{\bibinfo{person}{YooJung Choi}, \bibinfo{person}{Meihua Dang}, {and} \bibinfo{person}{Guy~Van den Broeck}.} \bibinfo{year}{2020}\natexlab{}.
\newblock \showarticletitle{Group Fairness by Probabilistic Modeling with Latent Fair Decisions}.
\newblock \bibinfo{journal}{\emph{CoRR}}  \bibinfo{volume}{abs/2009.09031} (\bibinfo{year}{2020}).
\newblock


\bibitem[Corbett-Davies et~al\mbox{.}(2017)]%
        {corbett2017algorithmic}
\bibfield{author}{\bibinfo{person}{Sam Corbett-Davies}, \bibinfo{person}{Emma Pierson}, \bibinfo{person}{Avi Feller}, \bibinfo{person}{Sharad Goel}, {and} \bibinfo{person}{Aziz Huq}.} \bibinfo{year}{2017}\natexlab{}.
\newblock \showarticletitle{Algorithmic decision making and the cost of fairness}. In \bibinfo{booktitle}{\emph{Proceedings of the 23rd acm sigkdd international conference on knowledge discovery and data mining}}. \bibinfo{pages}{797--806}.
\newblock


\bibitem[Dempster et~al\mbox{.}(1977)]%
        {Dempster:77:RSS}
\bibfield{author}{\bibinfo{person}{A.~P. Dempster}, \bibinfo{person}{N.~M. Laird}, {and} \bibinfo{person}{D.~B. Rubin}.} \bibinfo{year}{1977}\natexlab{}.
\newblock \showarticletitle{Maximum likelihood from incomplete data via the {EM} algorithm}.
\newblock \bibinfo{journal}{\emph{Proceedings of the Royal Statistical Society}}  \bibinfo{volume}{B-39} (\bibinfo{year}{1977}), \bibinfo{pages}{1--38}.
\newblock


\bibitem[Dwork et~al\mbox{.}(2012)]%
        {dwork2012fairness}
\bibfield{author}{\bibinfo{person}{Cynthia Dwork}, \bibinfo{person}{Moritz Hardt}, \bibinfo{person}{Toniann Pitassi}, \bibinfo{person}{Omer Reingold}, {and} \bibinfo{person}{Richard Zemel}.} \bibinfo{year}{2012}\natexlab{}.
\newblock \showarticletitle{Fairness through awareness}. In \bibinfo{booktitle}{\emph{Proceedings of the 3rd innovations in theoretical computer science conference}}. \bibinfo{pages}{214--226}.
\newblock


\bibitem[ElSalamouny and Palamidessi(2020)]%
        {elsalamouny2020generalized}
\bibfield{author}{\bibinfo{person}{Ehab ElSalamouny} {and} \bibinfo{person}{Catuscia Palamidessi}.} \bibinfo{year}{2020}\natexlab{}.
\newblock \showarticletitle{Generalized Iterative Bayesian Update and Applications to Mechanisms for Privacy Protection}. In \bibinfo{booktitle}{\emph{2020 IEEE European Symposium on Security and Privacy (EuroS\&P)}}. IEEE, \bibinfo{pages}{490--507}.
\newblock


\bibitem[Feldman et~al\mbox{.}(2015)]%
        {Feldman}
\bibfield{author}{\bibinfo{person}{Michael Feldman}, \bibinfo{person}{Sorelle~A. Friedler}, \bibinfo{person}{John Moeller}, \bibinfo{person}{Carlos Scheidegger}, {and} \bibinfo{person}{Suresh Venkatasubramanian}.} \bibinfo{year}{2015}\natexlab{}.
\newblock \showarticletitle{Certifying and Removing Disparate Impact}. In \bibinfo{booktitle}{\emph{Proceedings of the 21th ACM SIGKDD International Conference on Knowledge Discovery and Data Mining}} (Sydney, NSW, Australia) \emph{(\bibinfo{series}{KDD '15})}. \bibinfo{publisher}{Association for Computing Machinery}, \bibinfo{address}{New York, NY, USA}, \bibinfo{pages}{259–268}.
\newblock
\showISBNx{9781450336642}
\urldef\tempurl%
\url{https://doi.org/10.1145/2783258.2783311}
\showDOI{\tempurl}


\bibitem[for Health~Statistics(ANES)]%
        {NHANES}
\bibfield{author}{\bibinfo{person}{National~Center for Health~Statistics}.} \bibinfo{year}{NHANES}\natexlab{}.
\newblock \bibinfo{title}{$https://www.cdc.gov/nchs/nhanes/about\_nhanes.htm$}.
\newblock
\newblock


\bibitem[Friedler et~al\mbox{.}(2021)]%
        {friedler2016possibility}
\bibfield{author}{\bibinfo{person}{Sorelle~A. Friedler}, \bibinfo{person}{Carlos Scheidegger}, {and} \bibinfo{person}{Suresh Venkatasubramanian}.} \bibinfo{year}{2021}\natexlab{}.
\newblock \showarticletitle{The (Im)possibility of fairness: different value systems require different mechanisms for fair decision making}.
\newblock \bibinfo{journal}{\emph{Commun. ACM}} \bibinfo{volume}{64}, \bibinfo{number}{4} (\bibinfo{year}{2021}), \bibinfo{pages}{136--143}.
\newblock


\bibitem[Glymour et~al\mbox{.}(2016)]%
        {glymour2016causal}
\bibfield{author}{\bibinfo{person}{Madelyn Glymour}, \bibinfo{person}{Judea Pearl}, {and} \bibinfo{person}{Nicholas~P Jewell}.} \bibinfo{year}{2016}\natexlab{}.
\newblock \bibinfo{booktitle}{\emph{Causal inference in statistics: A primer}}.
\newblock \bibinfo{publisher}{John Wiley \& Sons}.
\newblock


\bibitem[Goodman et~al\mbox{.}(2020)]%
        {taketwo}
\bibfield{author}{\bibinfo{person}{Joshua Goodman}, \bibinfo{person}{Oded Gurantz}, {and} \bibinfo{person}{Jonathan Smith}.} \bibinfo{year}{2020}\natexlab{}.
\newblock \showarticletitle{Take Two! SAT Retaking and College Enrollment Gaps}.
\newblock \bibinfo{journal}{\emph{American Economic Journal: Economic Policy}} \bibinfo{volume}{12}, \bibinfo{number}{2} (\bibinfo{date}{May} \bibinfo{year}{2020}), \bibinfo{pages}{115--58}.
\newblock
\urldef\tempurl%
\url{https://doi.org/10.1257/pol.20170503}
\showDOI{\tempurl}


\bibitem[Hannon(2012)]%
        {testanxitetyhannon}
\bibfield{author}{\bibinfo{person}{Brenda Hannon}.} \bibinfo{year}{2012}\natexlab{}.
\newblock \showarticletitle{Test Anxiety and Performance-Avoidance Goals Explain Gender Differences in SAT-V, SAT-M, and Overall SAT Scores}.
\newblock \bibinfo{journal}{\emph{Personality and individual differences}}  \bibinfo{volume}{53} (\bibinfo{date}{11} \bibinfo{year}{2012}), \bibinfo{pages}{816--820}.
\newblock
\urldef\tempurl%
\url{https://doi.org/10.1016/j.paid.2012.06.003}
\showDOI{\tempurl}


\bibitem[Hardt et~al\mbox{.}(2016)]%
        {hardt2016equality}
\bibfield{author}{\bibinfo{person}{Moritz Hardt}, \bibinfo{person}{Eric Price}, \bibinfo{person}{Eric Price}, {and} \bibinfo{person}{Nati Srebro}.} \bibinfo{year}{2016}\natexlab{}.
\newblock \showarticletitle{Equality of Opportunity in Supervised Learning}. In \bibinfo{booktitle}{\emph{Advances in Neural Information Processing Systems}}, \bibfield{editor}{\bibinfo{person}{D.~Lee}, \bibinfo{person}{M.~Sugiyama}, \bibinfo{person}{U.~Luxburg}, \bibinfo{person}{I.~Guyon}, {and} \bibinfo{person}{R.~Garnett}} (Eds.), Vol.~\bibinfo{volume}{29}. \bibinfo{publisher}{Curran Associates, Inc.}
\newblock
\urldef\tempurl%
\url{https://proceedings.neurips.cc/paper/2016/file/9d2682367c3935defcb1f9e247a97c0d-Paper.pdf}
\showURL{%
\tempurl}


\bibitem[Heerwegh(2014)]%
        {heerwegh2014small}
\bibfield{author}{\bibinfo{person}{Dirk Heerwegh}.} \bibinfo{year}{2014}\natexlab{}.
\newblock \showarticletitle{Small sample Bayesian factor analysis}.
\newblock \bibinfo{journal}{\emph{Phuse. Retrieved from http://www. lexjansen. com/phuse/2014/sp/SP03. pdf}} (\bibinfo{year}{2014}).
\newblock


\bibitem[Islam et~al\mbox{.}(2022)]%
        {islam2022fair}
\bibfield{author}{\bibinfo{person}{Rashidul Islam}, \bibinfo{person}{Shimei Pan}, {and} \bibinfo{person}{James~R Foulds}.} \bibinfo{year}{2022}\natexlab{}.
\newblock \showarticletitle{Fair Inference for Discrete Latent Variable Models}.
\newblock \bibinfo{journal}{\emph{arXiv preprint arXiv:2209.07044}} (\bibinfo{year}{2022}).
\newblock


\bibitem[Kamiran et~al\mbox{.}(2013)]%
        {kamiran_quantifying_2013}
\bibfield{author}{\bibinfo{person}{Faisal Kamiran}, \bibinfo{person}{Indrė Žliobaitė}, {and} \bibinfo{person}{Toon Calders}.} \bibinfo{year}{2013}\natexlab{}.
\newblock \showarticletitle{Quantifying explainable discrimination and removing illegal discrimination in automated decision making}.
\newblock \bibinfo{journal}{\emph{Knowledge and Information Systems}} \bibinfo{volume}{35}, \bibinfo{number}{3} (\bibinfo{date}{June} \bibinfo{year}{2013}), \bibinfo{pages}{613--644}.
\newblock
\showISSN{0219-1377, 0219-3116}
\urldef\tempurl%
\url{https://doi.org/10.1007/s10115-012-0584-8}
\showDOI{\tempurl}


\bibitem[Kusner et~al\mbox{.}(2017)]%
        {kusner2018counterfactual}
\bibfield{author}{\bibinfo{person}{Matt~J. Kusner}, \bibinfo{person}{Joshua~R. Loftus}, \bibinfo{person}{Chris~Russell 0001}, {and} \bibinfo{person}{Ricardo Silva}.} \bibinfo{year}{2017}\natexlab{}.
\newblock \showarticletitle{Counterfactual Fairness}.
\newblock \bibinfo{journal}{\emph{CoRR}}  \bibinfo{volume}{abs/1703.06856} (\bibinfo{year}{2017}).
\newblock
\urldef\tempurl%
\url{http://arxiv.org/abs/1703.06856}
\showURL{%
\tempurl}


\bibitem[Kwon and Belsky(2021)]%
        {kwon2021toolkit}
\bibfield{author}{\bibinfo{person}{Dayoon Kwon} {and} \bibinfo{person}{Daniel~W Belsky}.} \bibinfo{year}{2021}\natexlab{}.
\newblock \showarticletitle{A toolkit for quantification of biological age from blood chemistry and organ function test data: BioAge}.
\newblock \bibinfo{journal}{\emph{GeroScience}}  \bibinfo{volume}{43} (\bibinfo{year}{2021}), \bibinfo{pages}{2795--2808}.
\newblock


\bibitem[Liu et~al\mbox{.}(2023)]%
        {liu2023oxidative}
\bibfield{author}{\bibinfo{person}{Wen Liu}, \bibinfo{person}{Jia Wang}, \bibinfo{person}{Miao Wang}, \bibinfo{person}{Huimin Hou}, \bibinfo{person}{Xin Ding}, \bibinfo{person}{Lingzhi Ma}, {and} \bibinfo{person}{Ming Liu}.} \bibinfo{year}{2023}\natexlab{}.
\newblock \showarticletitle{Oxidative Stress Factors Mediate the Association Between Life's Essential 8 and Accelerated Phenotypic Aging: NHANES 2005-2018}.
\newblock \bibinfo{journal}{\emph{The Journals of Gerontology: Series A}} (\bibinfo{year}{2023}), \bibinfo{pages}{glad240}.
\newblock


\bibitem[Louizos et~al\mbox{.}(2017)]%
        {louizos2017causal}
\bibfield{author}{\bibinfo{person}{Christos Louizos}, \bibinfo{person}{Uri Shalit}, \bibinfo{person}{Joris~M Mooij}, \bibinfo{person}{David Sontag}, \bibinfo{person}{Richard Zemel}, {and} \bibinfo{person}{Max Welling}.} \bibinfo{year}{2017}\natexlab{}.
\newblock \showarticletitle{Causal effect inference with deep latent-variable models}.
\newblock \bibinfo{journal}{\emph{Advances in neural information processing systems}}  \bibinfo{volume}{30} (\bibinfo{year}{2017}).
\newblock


\bibitem[Louizos et~al\mbox{.}(2016)]%
        {louizos2015variational}
\bibfield{author}{\bibinfo{person}{Christos Louizos}, \bibinfo{person}{Kevin Swersky}, \bibinfo{person}{Yujia Li}, \bibinfo{person}{Max Welling}, {and} \bibinfo{person}{Richard~S. Zemel}.} \bibinfo{year}{2016}\natexlab{}.
\newblock \showarticletitle{The Variational Fair Autoencoder}. In \bibinfo{booktitle}{\emph{4th International Conference on Learning Representations, ICLR 2016, San Juan, Puerto Rico, May 2-4, 2016, Conference Track Proceedings}}, \bibfield{editor}{\bibinfo{person}{Yoshua Bengio} {and} \bibinfo{person}{Yann LeCun}} (Eds.).
\newblock
\urldef\tempurl%
\url{http://arxiv.org/abs/1511.00830}
\showURL{%
\tempurl}


\bibitem[Madras et~al\mbox{.}(2019)]%
        {madras2018fairness}
\bibfield{author}{\bibinfo{person}{David Madras}, \bibinfo{person}{Elliot Creager}, \bibinfo{person}{Toniann Pitassi}, {and} \bibinfo{person}{Richard Zemel}.} \bibinfo{year}{2019}\natexlab{}.
\newblock \showarticletitle{Fairness through causal awareness: Learning causal latent-variable models for biased data}. In \bibinfo{booktitle}{\emph{Proceedings of the conference on fairness, accountability, and transparency}}. \bibinfo{pages}{349--358}.
\newblock


\bibitem[McLachlan and Krishnan(2007)]%
        {mclachlan2007algorithm}
\bibfield{author}{\bibinfo{person}{Geoffrey~J McLachlan} {and} \bibinfo{person}{Thriyambakam Krishnan}.} \bibinfo{year}{2007}\natexlab{}.
\newblock \bibinfo{booktitle}{\emph{The EM algorithm and extensions}}.
\newblock \bibinfo{publisher}{John Wiley \& Sons}.
\newblock


\bibitem[McNeish(2016)]%
        {mcneish2016using}
\bibfield{author}{\bibinfo{person}{Daniel McNeish}.} \bibinfo{year}{2016}\natexlab{}.
\newblock \showarticletitle{On using Bayesian methods to address small sample problems}.
\newblock \bibinfo{journal}{\emph{Structural Equation Modeling: A Multidisciplinary Journal}} \bibinfo{volume}{23}, \bibinfo{number}{5} (\bibinfo{year}{2016}), \bibinfo{pages}{750--773}.
\newblock


\bibitem[Nguyen et~al\mbox{.}(2022)]%
        {nguyen2022biological}
\bibfield{author}{\bibinfo{person}{LM Nguyen}, \bibinfo{person}{JJ Chon}, \bibinfo{person}{EE Kim}, \bibinfo{person}{JC Cheng}, {and} \bibinfo{person}{JL Ebersole}.} \bibinfo{year}{2022}\natexlab{}.
\newblock \showarticletitle{Biological aging and periodontal disease: analysis of NHANES (2001--2002)}.
\newblock \bibinfo{journal}{\emph{JDR Clinical \& Translational Research}} \bibinfo{volume}{7}, \bibinfo{number}{2} (\bibinfo{year}{2022}), \bibinfo{pages}{145--153}.
\newblock


\bibitem[Obermeyer et~al\mbox{.}(2019)]%
        {obermeyer2019dissecting}
\bibfield{author}{\bibinfo{person}{Ziad Obermeyer}, \bibinfo{person}{Brian Powers}, \bibinfo{person}{Christine Vogeli}, {and} \bibinfo{person}{Sendhil Mullainathan}.} \bibinfo{year}{2019}\natexlab{}.
\newblock \showarticletitle{Dissecting racial bias in an algorithm used to manage the health of populations}.
\newblock \bibinfo{journal}{\emph{Science}} \bibinfo{volume}{366}, \bibinfo{number}{6464} (\bibinfo{year}{2019}), \bibinfo{pages}{447--453}.
\newblock


\bibitem[Ovadia et~al\mbox{.}(2019)]%
        {ovadia2019can}
\bibfield{author}{\bibinfo{person}{Yaniv Ovadia}, \bibinfo{person}{Emily Fertig}, \bibinfo{person}{Jie Ren}, \bibinfo{person}{Zachary Nado}, \bibinfo{person}{David Sculley}, \bibinfo{person}{Sebastian Nowozin}, \bibinfo{person}{Joshua Dillon}, \bibinfo{person}{Balaji Lakshminarayanan}, {and} \bibinfo{person}{Jasper Snoek}.} \bibinfo{year}{2019}\natexlab{}.
\newblock \showarticletitle{Can you trust your model's uncertainty? evaluating predictive uncertainty under dataset shift}.
\newblock \bibinfo{journal}{\emph{Advances in neural information processing systems}}  \bibinfo{volume}{32} (\bibinfo{year}{2019}).
\newblock


\bibitem[Pearl and Bareinboim(2014)]%
        {pearl2022external}
\bibfield{author}{\bibinfo{person}{Judea Pearl} {and} \bibinfo{person}{Elias Bareinboim}.} \bibinfo{year}{2014}\natexlab{}.
\newblock \showarticletitle{External Validity: From Do-Calculus to Transportability Across Populations}.
\newblock \bibinfo{journal}{\emph{Statist. Sci.}} \bibinfo{volume}{29}, \bibinfo{number}{4} (\bibinfo{date}{nov} \bibinfo{year}{2014}).
\newblock
\urldef\tempurl%
\url{https://doi.org/10.1214/14-sts486}
\showDOI{\tempurl}


\bibitem[Pedregosa et~al\mbox{.}(2011)]%
        {pedregosa2011scikit}
\bibfield{author}{\bibinfo{person}{Fabian Pedregosa}, \bibinfo{person}{Ga{\"e}l Varoquaux}, \bibinfo{person}{Alexandre Gramfort}, \bibinfo{person}{Vincent Michel}, \bibinfo{person}{Bertrand Thirion}, \bibinfo{person}{Olivier Grisel}, \bibinfo{person}{Mathieu Blondel}, \bibinfo{person}{Peter Prettenhofer}, \bibinfo{person}{Ron Weiss}, \bibinfo{person}{Vincent Dubourg}, {et~al\mbox{.}}} \bibinfo{year}{2011}\natexlab{}.
\newblock \showarticletitle{Scikit-learn: Machine learning in Python}.
\newblock \bibinfo{journal}{\emph{the Journal of machine Learning research}}  \bibinfo{volume}{12} (\bibinfo{year}{2011}), \bibinfo{pages}{2825--2830}.
\newblock


\bibitem[Quinonero-Candela et~al\mbox{.}(2008)]%
        {quinonero2008dataset}
\bibfield{author}{\bibinfo{person}{Joaquin Quinonero-Candela}, \bibinfo{person}{Masashi Sugiyama}, \bibinfo{person}{Anton Schwaighofer}, {and} \bibinfo{person}{Neil~D Lawrence}.} \bibinfo{year}{2008}\natexlab{}.
\newblock \bibinfo{booktitle}{\emph{Dataset shift in machine learning}}.
\newblock \bibinfo{publisher}{Mit Press}.
\newblock


\bibitem[Sch{\"o}lkopf et~al\mbox{.}(2021)]%
        {bengio_2021}
\bibfield{author}{\bibinfo{person}{Bernhard Sch{\"o}lkopf}, \bibinfo{person}{Francesco Locatello}, \bibinfo{person}{Stefan Bauer}, \bibinfo{person}{Nan~Rosemary Ke}, \bibinfo{person}{Nal Kalchbrenner}, \bibinfo{person}{Anirudh Goyal}, {and} \bibinfo{person}{Yoshua Bengio}.} \bibinfo{year}{2021}\natexlab{}.
\newblock \showarticletitle{Towards Causal Representation Learning}.
\newblock \bibinfo{journal}{\emph{CoRR}}  \bibinfo{volume}{abs/2102.11107} (\bibinfo{year}{2021}).
\newblock


\bibitem[Wu(1983)]%
        {Wu:83:jastat}
\bibfield{author}{\bibinfo{person}{C.~F.~Jeff Wu}.} \bibinfo{year}{1983}\natexlab{}.
\newblock \showarticletitle{On the Convergence Properties of the {EM} Algorithm}.
\newblock \bibinfo{journal}{\emph{The Annals of Statistics}} \bibinfo{volume}{11}, \bibinfo{number}{1} (\bibinfo{year}{1983}), \bibinfo{pages}{95--103}.
\newblock
\showISSN{00905364}


\bibitem[Xu et~al\mbox{.}(2023)]%
        {xu2023blunted}
\bibfield{author}{\bibinfo{person}{Yanyan Xu}, \bibinfo{person}{Xiaoling Wang}, \bibinfo{person}{Daniel~W Belsky}, \bibinfo{person}{William~V McCall}, \bibinfo{person}{Yutao Liu}, {and} \bibinfo{person}{Shaoyong Su}.} \bibinfo{year}{2023}\natexlab{}.
\newblock \showarticletitle{Blunted rest--activity circadian rhythm is associated with increased rate of biological aging: an analysis of NHANES 2011--2014}.
\newblock \bibinfo{journal}{\emph{The Journals of Gerontology: Series A}} \bibinfo{volume}{78}, \bibinfo{number}{3} (\bibinfo{year}{2023}), \bibinfo{pages}{407--413}.
\newblock


\end{thebibliography}

\appendix
\setcounter{figure}{0}
\section{ Derivation of BaBE as an instance of the EM method}\label{sec:appendix}

\label{appendix}

In this section, we show how to apply the EM method to the problem we are considering, thus obtaining the main algorithm of our method BaBE. 

\newcommand{\EX}{{\mathbb E}}
\newcommand{\define}{\stackrel{\text{def}} =}

Let $E$, $Z$  and $S$ be random variables on $\mathcal E$, $\mathcal Z$  and $\mathcal S$, with generic elements $e,z$ and $s$
respectively. 
%For a fixed $s \in \S$, we denote by $\bar{e} = e_1, e_2, \ldots , e_N $ a sequence of $N$ i.i.d. samples from $P[E|S=s]$. Correspondingly, we denote by $\bar{z} = z_1, z_2, \ldots , z_N$ the sequence of perturbed version of $e_i$, where the biasing mechanism that yields $z$ from $e$ and $s$ is denoted by $P [Z = z|E = e, S = s]$$. 
Let $(\bar{z},\bar{s}) \ = \{(z_i,s_i) \ |\  i = [1,\ldots,N]\}$ be a sequence of samples from the joint distribution $P[Z,S]$, let 
\begin{equation}
\label{eq:defHatP}
\bar z_s \define  \{ z_i\ :\ i \in \{1,...,N\} \wedge s_i = s \}  
\end{equation}
be the subsequence of $\bar z$ of elements paired with $s$ in the samples
and let $M$ be $|\bar z_s|$.
Then, the empirical probability of $Z=z$ given $S=s$ (i.e., the frequency of $z$ in the samples with $S=s$) is defined as:
\begin{equation}
% \label{eq:defHatP}
\varphi_s[z, \bar z_s] \define \frac{  | \{ z_i \in \bar z_s\ :\ z_i = z \}  |  } { M }.
\end{equation}

% $P_X[x] = P[X=x|U=u]$, where $U$ is a latent unfair confounder causing the bias in $X$. $U$ is considered fixed: $P_X[x] = P_{X|U=u}$ and $P_{Z|X} = P_{Z|X, U=u}$ and estimated for each value of $U=u$.\\

Now, given $(\bar{z},\bar{s})$, $s \in {\mathcal S}$,  $\varphi_s[z,\bar z_s]$ and the conditional distribution $P[Z|E,S]$, we want to estimate the (unknown) $P[E|S]$ by applying the Expectation-Maximization (EM) method, i.e., by finding the probability distribution on $\mathcal E$ that maximizes the probability of observing $\bar z_s$ given $s$ (and that therefore is the best explanation of what we have observed). 
More precisely, we want to prove that our algorithm yields a Maximum Likelihood Estimation (MLE) $\hat{P}[E|S]$ that approximates $P[E|S]$. To this end, let $\Theta$ denote the set of all distributions on $\mathcal E$ conditioned on $S=s$, and let $\theta$ range over it. 
The {\em log-likelihood function} for $\bar z_s$ is $L_{\bar z_s} : \Theta \rightarrow {\mathbf R}$ such that
\begin{equation}
L_{\bar z_s}(\theta) \define \log P[\bar{Z}_s=\bar{z}_s|\theta]
\end{equation}
where $\bar Z_s$ denotes a sequence of $M$ random samples drawn from $\mathcal Z$ when $S=s$.
Given $\bar z_s$, a MLE of the unknown $P[E|S]$ is then defined as $\argmax_\theta L_{\bar z_s}(\theta)$, i.e., as the $\theta$ that maximizes $L_{\bar z_s}(\theta)$ (and therefore $P[\bar{Z}_s=\bar{z}_s|\theta]$, since $\log$ is monotone). 

%The EM framework is a powerful method used when the  model has hidden data that, if known, would make the estimation procedure easier. 
We now show how to adapt the EM framework to the above setting.
% , by following the proof of \cite{agrawal_ibu}.
% Since both $P_{E|S}$ and $\bar E$ are actually unknown, 
%instead of $\log P_{\bar E | \bar Z,S,P_{E|S}} (\bar e | \bar z,s)$, we consider its expected value computed by using a prior approximation $\theta'$ of $P_{E|S}$. This expectation yields 
We start by defining the  function 
\begin{equation}
Q(\theta,\theta') \define \EX[\log \bar\theta\ |\ \bar{Z}_s=\bar{z}_s, S=s, \theta'] 
\end{equation} 
where $\bar\theta$ denotes the probability distribution on sequences $\bar{e} = e_1, e_2, \ldots , e_M $ of i.i.d. events all with probability distribution $\theta$.  
The above expectation is taken  on all $\mathcal E$ and conditioned on $\bar{Z}_s=\bar{z}_s, S=s$, and assuming $\theta'$ as a prior approximation of $P[E|S]$.

The function $Q$ has the nice property that $L_{\bar z_s}(\theta)- L_{\bar z_s}(\theta')\geq Q(\theta,\theta')- Q(\theta',\theta')$. Hence, in order to improve the approximation of the MLE, i.e.,  to find an  estimation $\theta$ that improves the  estimation $\theta'$, it is sufficient to compute $Q(\theta,\theta') $ and find the $\theta$ that maximizes it.

% The following two Lemmata are used to show that function $Q$ has the nice property that the value that maximizes it also maximizes the log-likelihood function.

% The first lemma will be used to state that $Q$ is strictly concave. This fact, together with the assumption that $P_{Z|E,S}$ seen as a stochastic matrix is invertible, will ensure us that $L_{\bar z}(\cdot)$ has a unique maximum.

\begin{lemma}
\label{lem:Q}
\begin{align*}
\textstyle
Q(\theta,\theta')
=\sum\limits_{i=1}^{M}\sum\limits_{e \in {\mathcal E}}\frac{P[Z_s=z_i|E=e,S=s]\ \theta'[e|s]}{\sum\limits_{e' \in {\mathcal E}}P[Z_s=z_i|E=e',S=s]\ \theta'[e'|s]}\log \theta[e|s].
\end{align*}
\end{lemma}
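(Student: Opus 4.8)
The plan is to unfold the expectation defining $Q$ explicitly, exploit the i.i.d.\ structure of the samples to factor the relevant posterior, and then close with a single application of Bayes' theorem. First I would name the latent sequence $\bar E_s = (E_1,\ldots,E_M)$ of explaining variables associated with the observed $\bar z_s$, so that, writing the conditional expectation as a sum over $\bar e \in \mathcal E^M$,
\[ Q(\theta,\theta') = \sum_{\bar e} P[\bar E_s = \bar e \mid \bar Z_s = \bar z_s, S=s, \theta']\,\log\bar\theta(\bar e). \]
Since $\bar\theta$ is the product measure of $M$ i.i.d.\ draws from $\theta$, I would immediately replace $\log\bar\theta(\bar e)$ by $\sum_{i=1}^M \log\theta[e_i|s]$, turning the summand into a sum over the sample index $i$.

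The core step is the factorization of the posterior. Because the pairs $(E_j,Z_j)$ are mutually independent given $S=s$ and the parameter, the joint posterior over the whole latent sequence factors as $P[\bar E_s = \bar e \mid \bar Z_s = \bar z_s, S=s,\theta'] = \prod_{j=1}^M P[E=e_j \mid Z_s=z_j, S=s, \theta']$. Substituting this and exchanging the two summations, the $i$-th term is $\log\theta[e_i|s]$ multiplied by the full product of per-sample posteriors summed over all of $\bar e$; marginalizing out every $e_j$ with $j\neq i$ uses that each factor $\sum_{e_j} P[E=e_j \mid Z_s=z_j, S=s, \theta'] = 1$, so those $M-1$ factors collapse to unity and only the single-index posterior survives. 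This delivers the intermediate form
\[ Q(\theta,\theta') = \sum_{i=1}^M \sum_{e\in\mathcal E} P[E=e \mid Z_s=z_i, S=s, \theta']\,\log\theta[e|s]. \]

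Finally I would expand the single-sample posterior by Bayes' theorem, writing $P[E=e \mid Z_s=z_i, S=s, \theta'] = \frac{P[Z_s=z_i \mid E=e, S=s]\,\theta'[e|s]}{\sum_{e'} P[Z_s=z_i \mid E=e', S=s]\,\theta'[e'|s]}$, where I use $\theta'[e|s] = P[E=e \mid S=s, \theta']$ by definition of $\theta'$, the fact that the bias $P[Z|E,S]$ is fixed and parameter-independent, and that the denominator is exactly the normalizing constant $P[Z_s=z_i \mid S=s, \theta']$ obtained by the law of total probability. Substituting this into the intermediate form yields the claimed expression verbatim. I expect the main obstacle to be justifying the factorization and marginalization cleanly, that is, making precise that conditioning on the entire observed sequence decouples into independent per-sample posteriors and that summing a product of normalized conditionals over all but one index collapses to $1$; once that bookkeeping is settled, the remainder is a routine application of Bayes' rule.
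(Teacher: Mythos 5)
Your proposal is correct and follows essentially the same route as the paper's proof: reduce $Q$ to $\sum_{i=1}^{M}\sum_{e\in\mathcal E} P[E=e\mid Z_s=z_i,S=s,\theta']\log\theta[e|s]$ using the i.i.d.\ structure, then expand the per-sample posterior via Bayes' theorem with the denominator given by the law of total probability. The only difference is bookkeeping: the paper invokes linearity of conditional expectation, whereas you explicitly factorize the joint posterior over $\bar e$ and collapse the $M-1$ normalized factors, which in fact makes explicit a step the paper leaves implicit (that the posterior of $E_i$ given the whole observed sequence depends only on $z_i$).
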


Given that the $E_i$s are i.i.d., by definition and linearity of conditional expectation, we have that:

\begin{align}
\label{eq:YY}
& \EX [\log \bar\theta\ |\ \bar{Z}_s=\bar{z}_s, S=s, \theta'] \nonumber \\
& = \EX \left.\left[\log \prod\limits_{i=1}^{M} \theta[e_i|s]\ \right |\ \bar{Z}_s=\bar{z}_s, S=s, \theta' \right]  \nonumber \\
& =\EX \left.\left[\sum\limits_{i=1}^{M} \log \theta[e_i|s]\ \right |\ \bar{Z}_s=\bar{z}_s, S=s, \theta' \right]  \nonumber \\
& = \sum\limits_{i=1}^{M} \EX [\log \theta[e_i|s]\ |\ \bar{Z}_s=\bar{z}_s, S=s, \theta'] \nonumber \\
& = \sum\limits_{i=1}^{M}\sum\limits_{e \in {\mathcal E}} P[E=e|Z_s=z_i,S=s] \ \log \theta[e|s]
\end{align}
where $P[E|Z_s,S]$ is a probability based on the estimation  $\theta'$ of the unknown $P[E|S]$.
By taking the marginal distribution, we have that:

\begin{align}\label{eq:marginal}
& P[Z_s=z_i|S=s] \nonumber\\
& =\sum\limits_{e' \in {\mathcal E}} P[Z_s=z_i,E=e'|S=s]\nonumber\\
& = \sum\limits_{e' \in {\mathcal E}} P[Z_s=z_i|E=e',S=s]\theta'[e'|s].
\end{align}

%where the last equality holds since we are assuming $E$ and $S$ to be independent (so, $\theta'[\,\cdot\,|s] = \theta'[\,\cdot\,]$).
%For the same reason, 
By the conditional Bayes theorem and \eqref{eq:marginal}, we have that

\begin{align*}
\label{eq:XX}
&P[E=e|Z_s=z_i,S=s]   \\
&=\frac{P[Z_s=z_i|E=e,S=s] \theta'[e|s]}{P[Z_s=z_i|S=s]} \\
&= \frac{P[Z_s=z_i|E=e,S=s] \theta'[e|s]}{\sum\limits_{e' \in \mathcal E} P[Z_s=z_i|E=e',S=s] \theta'[e'|s]}
\end{align*}

By plugging 
%\eqref{eq:XX} 
the latter equality into \eqref{eq:YY}, we conclude the proof.

%\textcolor{blue}{SOMETHING LIKE WHAT FOLLOWS HAS BEEN SAID IN SECT.2... KEEP OR REMOVE FROM HERE?}

%By recalling the well-known notion of Kullback–Leibler divergence $D_{KL}(P||Q) \triangleq \sum\limits_{i=1}^N p_i \log\frac{p_i}{q_i}$, we have that
%\begin{equation}
%L_{\bar z}(\theta) = Q(\theta,\theta') + D_{KL}(\theta||\theta')
%\end{equation}
%This fact, together with $D_{KL}(\theta||\theta') \geq D_{KL}(\theta'||\theta')$ (Gibbs' inequality), gives the following fundamental property of every EM algorithm:
%\begin{equation}
%\forall \theta, \theta' \in {\Theta}\,.\ L_{\bar z}(\theta) - L_{\bar z}(\theta') \geq Q(\theta,\theta') - Q(\theta'|\theta')    
%\end{equation}
%This amounts to say that, if $\theta$ is chosen to improve $Q(\theta,\theta')$ w.r.t. $Q(\theta'|\theta')$, then $L_{\bar z}(\theta)$ is also improved w.r.t. $L_{\bar z}(\theta')$ by at least the same amount. Therefore, $L_{\bar z}(\,\cdot\,)$ monotonically grows by taking a $\theta$ that improves the previous approximation $\theta'$; since $L_{\bar z}(\,\cdot\,)$ is bounded from above by 0, it must converge (by the monotone convergence theorem). 

The next Lemma tells us that  $\hat{P}[E|S]^{(t+1)}$ (as defined in %\eqref{eq:iterative-step} and in
%$\sum\limits_{z \in {\mathcal Z}}\varphi_s[z, \bar z] \frac{P_{Z|E,S}[z|e,s]\hat P_E^{(t)}[e]}{\sum\limits_{e'\in {\mathcal E}}P_{Z|E,S}[z|e',s]\hat P_E^{(t)}[e']}$ 
Algorithm~1) is the distribution that maximizes $Q\left(\ \cdot\ , \hat P[E|S]^{(t)}\right)$.
% this fact, together with the previous argument, will allow us to conclude that the algorithm approximates the (unique) MLE $\hat P_{E|S}$ (see Theorem \ref{thm:soundness} later on).
 This fact
%  , together with the uniqueness of the maximum of $L_{\bar z}(\cdot)$, 
will allow us to conclude that the algorithm approximates the  MLE $\argmax_\theta L_{\bar z_s}(\theta)$.
%  (see Theorem \ref{thm:soundness} later on).

\begin{lemma} The $\theta$ that maximizes
$Q(\ \cdot\ , \theta')$ is such that, for every $e \in {\mathcal E}$:
$$
\theta[e|s] = \sum\limits_{z \in {\mathcal Z}}\varphi_s[z, \bar z_s]\frac{P[Z_s=z|E=e,S=s]\ \theta'[e|s]}{\sum\limits_{e'\in {\mathcal E}}P[Z_s=z|E=e',S=s]\ \theta'[e'|s]}.
$$
\end{lemma}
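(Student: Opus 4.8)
The plan is to treat the statement as a constrained optimisation: we must maximise $Q(\,\cdot\,,\theta')$ over all distributions $\theta$ on $\mathcal E$ (conditioned on $S=s$), subject to the single constraint $\sum_{e\in\mathcal E}\theta[e|s]=1$. The starting point is the closed form for $Q$ already established in Lemma~\ref{lem:Q}, so the only work left is the maximisation itself, together with a final rewriting in terms of the empirical frequency $\varphi_s$.

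First I would abbreviate the posterior weights. For each sample index $i$ and each $e\in\mathcal E$, set
\[
c_{i,e}\;=\;\frac{P[Z_s=z_i|E=e,S=s]\,\theta'[e|s]}{\sum_{e'\in\mathcal E}P[Z_s=z_i|E=e',S=s]\,\theta'[e'|s]},
\]
so that Lemma~\ref{lem:Q} reads $Q(\theta,\theta')=\sum_{i=1}^{M}\sum_{e}c_{i,e}\log\theta[e|s]$. The key preliminary observation is that, for fixed $i$, the $c_{i,e}$ are precisely the posterior probabilities $P[E=e|Z_s=z_i,S=s]$ derived in the proof of Lemma~\ref{lem:Q}, hence $\sum_{e}c_{i,e}=1$. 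Interchanging the two sums and writing $w_e=\sum_{i=1}^{M}c_{i,e}$, the objective collapses to $Q(\theta,\theta')=\sum_{e}w_e\log\theta[e|s]$, a weighted sum of logarithms with nonnegative weights.

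Next I would maximise $\sum_e w_e\log\theta[e|s]$ under $\sum_e\theta[e|s]=1$ by a Lagrange multiplier. Differentiating $\sum_e w_e\log\theta[e|s]-\lambda(\sum_e\theta[e|s]-1)$ in each coordinate gives the stationarity condition $\theta[e|s]=w_e/\lambda$, and the constraint forces $\lambda=\sum_{e''}w_{e''}$, so $\theta[e|s]=w_e/\sum_{e''}w_{e''}$. Since $\log$ is strictly concave the objective is concave, so this interior stationary point is the unique global maximiser and no further second-order check is needed. The normalising constant is immediate from the preliminary observation, namely $\sum_{e''}w_{e''}=\sum_{i=1}^{M}\sum_{e''}c_{i,e''}=\sum_{i=1}^{M}1=M$, whence $\theta[e|s]=\frac1M\sum_{i=1}^{M}c_{i,e}$.

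The final step rewrites this average over samples as an average over distinct observed values, which is where $\varphi_s$ enters. Because $c_{i,e}$ depends on the sample only through the value $z_i$, I would group the indices $i$ by their common value $z=z_i$: the number of such indices divided by $M$ is exactly $\varphi_s[z,\bar z_s]$, and the shared weight is $P[Z_s=z|E=e,S=s]\,\theta'[e|s]/\sum_{e'}P[Z_s=z|E=e',S=s]\,\theta'[e'|s]$, yielding the claimed formula. I do not expect a genuine obstacle; the one point needing care is justifying that the maximiser is the interior stationary point rather than a boundary point with some $\theta[e|s]=0$. This follows because any coordinate with $w_e>0$ driven to $0$ sends the objective to $-\infty$, while coordinates with $w_e=0$ contribute nothing and are consistently assigned mass $0$ by the formula, so a short continuity argument closes the gap.
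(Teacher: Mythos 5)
Your proposal is correct and follows essentially the same route as the paper: it maximizes the closed form of $Q$ from Lemma~\ref{lem:Q} under the simplex constraint via a Lagrange multiplier, determines the multiplier to be $\pm M$ using the fact that the posterior weights sum to one over $e$ for each sample, and then regroups the sum over samples into a sum over distinct values of $z$ weighted by $\varphi_s[z,\bar z_s]$. Your added concavity and boundary discussion is a small improvement in rigor (the paper only exhibits the stationary point without verifying it is the maximizer), but it does not constitute a different approach.
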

\begin{proof}
By the method of Lagrangian multipliers, we can find the $\theta$ that maximizes $Q(\theta, \theta')$ by adding to the latter the term $\lambda\left(\sum\limits_{e \in {\mathcal E}}\theta[e|s]-1\right)$, for some $\lambda$, and study the function
\begin{equation}
F(\theta, \theta') \triangleq Q(\theta, \theta') + \lambda\left(\sum\limits_{e\in {\mathcal E}}\theta[e|s]-1\right)
\end{equation}
that has the same stationary points as $Q(\theta, \theta')$ since $\sum\limits_{e \in {\mathcal E}}\theta[e|s] = 1$, being $\theta$ a probability distribution on $\mathcal E$ given $S=s$.
To find the stationary points of $F$, we impose that all its partial derivatives, including the one w.r.t. $\lambda$, are equal to $0$. 
For the latter one, we require that
\begin{equation}
\label{eq:derLambda}
\frac{\partial F}{\partial \lambda} = \sum\limits_{e\in {\mathcal E}}\theta[e|s]-1=0 
%\iff \sum\limits_{e\in {\mathcal E}}P_E[e] =1
\end{equation}
and this trivially holds since $\theta[\cdot|s]$ is a distribution for every $s$.
For the former ones, by relying on Lemma \ref{lem:Q}, we impose that, for every $e \in {\mathcal E}$,

\begin{align}\label{eq:derPE}
&\frac{\partial F}{\partial \theta[e|s]} \nonumber \\
&= \frac{1}{\theta[e|s]}\sum\limits_{i=1}^{M}
\frac{P[Z_s=z_i|E=e,S=s]\ \theta'[e|s]}{\sum\limits_{e'\in {\mathcal E}}P[Z_s=z_i|E=e',S=s]\ \theta'[e'|s]}+ \lambda
\nonumber\\
& = 0
\end{align}

By multiplying the last equality by $\theta[e|s]$, we get:
\begin{equation}
\label{eq:bla}
 \lambda \,\theta[e|s] = - \sum\limits_{i=1}^{M} \frac{P[Z_s=z_i|E=e,S=s]\ \theta'[e|s]}{\sum\limits_{e'\in {\mathcal E}}P[Z_s=z_i|E=e',S=s]\ \theta'[e'|s]}.
\end{equation}
By summing both sides of \eqref{eq:bla} on all $e\in\mathcal E$, we obtain:
\begin{align}
\label{eq:boh}
&\lambda \sum\limits_{e \in {\mathcal E}}\theta[e|s] 
\nonumber\\
&=  - \sum\limits_{e \in {\mathcal E}}\sum\limits_{i=1}^{M}\frac{P[Z_s=z_i|E=e,S=s]\ \theta'[e|s]}{\sum\limits_{e' \in {\mathcal E}}P[Z_s=z_i|E=e',S=s]\ \theta'[e'|s]} \nonumber\\ 
&= - \sum\limits_{e \in {\mathcal E}} \sum\limits_{z \in {\mathcal Z}}\varphi_s[z, \bar z_s] M \frac{P[Z_s=z|E=e,S=s]\ \theta'[e|s]}{\sum\limits_{e' \in {\mathcal E}}P[Z_s=z|E=e',S=s]\ \theta'[e'|s]} \nonumber\\
&= -M \sum\limits_{z \in {\mathcal Z}} \varphi_s[z, \bar z_s] \frac{\sum\limits_{e \in {\mathcal E}}P[Z_s=z|E=e,S=s]\ \theta'[e|s]}{\sum\limits_{e' \in {\mathcal E}}P[Z_s=z|E=e',S=s]\ \theta'[e'|s]} \nonumber\\
&= -M \sum\limits_{z \in {\mathcal Z}} \varphi_s[z, \bar z_s] \nonumber\\
&= -M
\end{align}
where the last step holds because of \eqref{eq:defHatP}, and
the second step holds because, again by \eqref{eq:defHatP}, we have that, for any function $f$:
\begin{equation}
\label{eq:f}
\sum\limits_{i=1}^M f(z_i) = \sum\limits_{z \in {\mathcal Z}} \varphi_s[z, \bar z_s] M f(z).
\end{equation}

%; in particular, by \eqref{eq:defHatP}, $\hat P_Z[z] \neq 0$ only for the $z_i$'s.
Hence, since $\theta[\cdot|s]$ is a probability distribution on $\mathcal E$, we obtain that \eqref{eq:derPE} is satisfied by taking $\lambda = -M$. 

Therefore, by isolating $\theta[e|s]$ from \eqref{eq:bla} and by using \eqref{eq:f}, we can conclude that, for every $e \in {\mathcal E}$, we have that
\begin{align*}
 & \theta[e|s] \\
& = - \frac 1 \lambda \sum\limits_{i=1}^{M} \frac{P[Z_s=z_i|E=e,S=s]\ \theta'[e|s]}{\sum\limits_{e'\in {\mathcal E}}P[Z_s=z_i|E=e',S=s]\ \theta'[e'|s]}
 \\
 &=\frac 1 M \sum\limits_{i=1}^{M} \frac{P[Z_s=z_i|E=e,S=s]\ \theta'[e|s]}{\sum\limits_{e'\in {\mathcal E}}P[Z_s=z_i|E=e',S=s]\ \theta'[e'|s]}
 \\
 &=\sum\limits_{z \in {\mathcal Z}} \varphi_s[z, \bar z_s] \frac{P[Z_s=z|E=e,S=s]\ \theta'[e|s]}{\sum\limits_{e'\in {\mathcal E}}P[Z_s=z|E=e',S=s]\ \theta'[e'|s]}
\end{align*}
% To conclude, notice that $\theta$ is the maximum of $Q(\ \cdot\ , \theta')$. Indeed, 
% like in \cite[Prop.\,4.1]{agrawal_ibu}, we can show that $\Theta$ is convex and that $Q$ is strictly concave (by Lemma \ref{lem:Q}, the latter function is the linear combination of strictly concave functions); this implies that it has a unique maximum over $\Theta$ (viz., $\theta$).
\end{proof}
Now, for the given $s\in \mathcal S$, we define the sequence $\left \{\ \hat{P}[E|S=s]^{(t)}\, \right \}_{t\geq 0}$ as follows: 
\begin{eqnarray*}
   \hat{P}[E|S=s]^{(0)}  &\define  &\mbox{any fully supported distribution}\\[1em]
   \hat{P}[E|S=s]^{(t+1)} &\define &\argmax_\theta Q(\theta, \hat{P}[E|S=s]^{(t)})
\end{eqnarray*}

The next theorem states the key property of our algorithm, i.e. that $\left \{\ \hat{P}[E|S=s]^{(t)}\, \right\}_{t\geq 0}$  tends to the MLE $\argmax_\theta L_{\bar z_s}(\theta)$. The proof of the theorem follows from the fact that $Q(\theta , \theta')$ has  continuous derivatives in both its arguments, and from Theorem 4.3 in \cite{agrawal_ibu} (which is a reformulation of a result due to Wu \cite{Wu:83:jastat}). 

\begin{theorem}
\label{thm:soundness}
$\underset{t \rightarrow \infty }{\lim} \hat{P}[E|S=s]^{(t)} \ =\  \underset{\theta}\argmax\   L_{\bar z_s}(\,\theta\,)$.
\end{theorem}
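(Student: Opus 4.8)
The plan is to recognise the iteration $\hat P[E|S=s]^{(t+1)}=\argmax_\theta Q(\theta,\hat P[E|S=s]^{(t)})$ as a genuine instance of the EM scheme, verify the regularity hypotheses needed to invoke the Wu-type convergence result (Theorem~4.3 of \cite{agrawal_ibu}, itself a reformulation of \cite{Wu:83:jastat}), and then upgrade its generic conclusion — convergence to a stationary point — to convergence to the \emph{global} maximiser by exploiting concavity of $L_{\bar z_s}$. Throughout I abbreviate $\hat P[E|S=s]^{(t)}$ by $\hat P^{(t)}$.

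First I would record the monotonicity of the likelihood. Since $\hat P^{(t+1)}$ maximises $Q(\cdot,\hat P^{(t)})$, we have $Q(\hat P^{(t+1)},\hat P^{(t)})\geq Q(\hat P^{(t)},\hat P^{(t)})$, so the right-hand side of the minorisation inequality $L_{\bar z_s}(\theta)-L_{\bar z_s}(\theta')\geq Q(\theta,\theta')-Q(\theta',\theta')$ stated just before Lemma~\ref{lem:Q} is non-negative at $\theta=\hat P^{(t+1)}$, $\theta'=\hat P^{(t)}$; hence $L_{\bar z_s}(\hat P^{(t+1)})\geq L_{\bar z_s}(\hat P^{(t)})$. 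Next I would note that the parameter space $\Theta$ is the probability simplex over $\mathcal E$, which is compact and convex, and that $L_{\bar z_s}$ is bounded above (a sum of logarithms of probabilities $\leq 1$). The monotone, bounded sequence $\{L_{\bar z_s}(\hat P^{(t)})\}_t$ therefore converges, and by compactness $\{\hat P^{(t)}\}_t$ admits convergent subsequences.

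To pass from subsequential limits to genuine convergence I would invoke Theorem~4.3 of \cite{agrawal_ibu}, whose hypotheses are exactly: (i) the update is obtained by maximising $Q(\cdot,\theta')$, which is our definition and, via the explicit maximiser of the subsequent Lemma, coincides with Algorithm~\ref{alg:IBU}; and (ii) $Q(\theta,\theta')$ is continuous with continuous derivatives in both arguments. Condition (ii) follows from the closed form of Lemma~\ref{lem:Q}: $Q$ is a finite sum of terms (posterior weight)$\times\log\theta[e|s]$, where the weights are ratios of finite sums of the fixed continuous quantities $P[Z_s=z_i|E=e,S=s]\,\theta'[e|s]$, and the denominators stay bounded away from $0$ as long as $\theta'$ has full support — a property that the initialisation and the multiplicative update both preserve. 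The final, decisive step is to identify the limit with the \emph{global} MLE: here I would use that $L_{\bar z_s}$ is concave on $\Theta$, since $P[\bar Z_s=\bar z_s|\theta]=\prod_{i}\sum_e P[Z_s=z_i|E=e,S=s]\,\theta[e|s]$ gives $L_{\bar z_s}(\theta)=\sum_i\log\big(\sum_e P[Z_s=z_i|E=e,S=s]\,\theta[e|s]\big)$, a sum of logarithms of linear forms in $\theta$ and hence concave. For a concave objective on a convex compact set every stationary point is a global maximiser, so the stationary point delivered by Wu's theorem equals $\argmax_\theta L_{\bar z_s}(\theta)$, which is the claimed limit.

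I expect this last upgrade to be the main obstacle. Wu's theorem on its own guarantees only that limit points are stationary points of $L_{\bar z_s}$, and for general EM problems these need not be global optima; it is the concavity observation that closes the gap. A secondary subtlety is \emph{uniqueness} of the maximiser, needed so that the whole sequence (not merely subsequences) converges to a single point: strict concavity — equivalently, injectivity of the bias matrix $P[Z|E,S=s]$ acting on $\theta$ — is what secures it, and the Discussion already flags invertibility of this matrix as the condition under which the MLE is unique. When the matrix is singular the most one can claim is convergence of $\{L_{\bar z_s}(\hat P^{(t)})\}_t$ to the optimal value together with convergence of the iterates to the convex set of maximisers.
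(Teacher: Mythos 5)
Your proposal is correct and follows essentially the same route as the paper: the paper's entire proof consists of checking that $Q(\theta,\theta')$ has continuous derivatives in both arguments and then invoking Theorem 4.3 of the cited IBU paper (a reformulation of Wu's result), exactly as you do. The extra material you supply---EM monotonicity, compactness of the simplex, concavity of $L_{\bar z_s}$ to upgrade stationarity to global optimality, and the invertibility caveat for uniqueness---is precisely what the paper delegates to that citation (and, for uniqueness, to Theorem 4 of a second cited paper), so it fills in details rather than diverging from the paper's argument.
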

% \begin{proof}
% We note that $Q(\theta , \theta')$ has derivatives continuous in both its arguments; by using the formulation of a result due to Wu \cite{Wu83} provided in \cite[Thm.\,4.3]{agrawal_ibu}, this implies that the sequence $\left\{\hat P_{E|S}^{(t)}\right\}_{t \geq 0}$ converges to $\argmax  L_{\bar z}(\,\theta\,)$, as required.
% \end{proof}

Furthermore, if $P[Z|E,S]$,  seen as a stochastic matrix, is invertible, then the MLE  $\underset{\theta}\argmax \  L_{\bar z_s}(\,\theta\,)$ is  unique. The proof follows from Theorem 4 in \cite{elsalamouny2020generalized}.

\section{ Execution metrics for BaBE Algorithm on Synthetic Data}\label{sec:appendix2}

Here we provide the number of iterations and execution time for the BaBE algorithm on the synthetic data of Section ~\ref{sec:experiments}. The code was run on a Apple M3 Pro with 12 cores 36 GB of memory. No GPU was used.

\begin{figure}[h]
    \centering
    \includegraphics[width=0.8\linewidth]{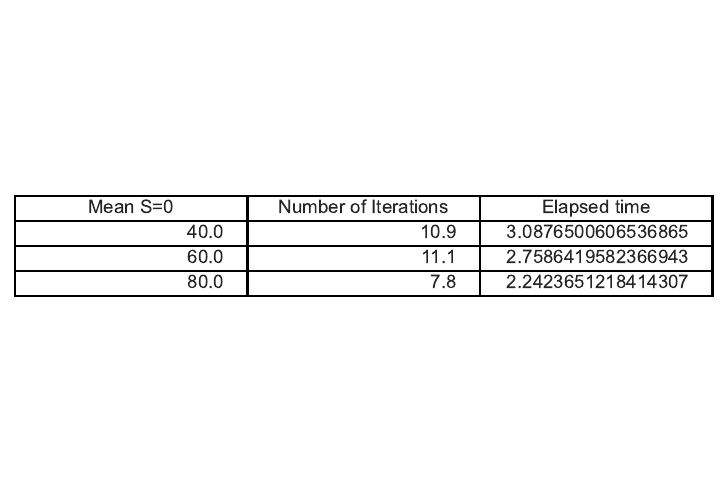}
     \vspace{-5mm}
     \caption{Execution time (in seconds) and an average number of iterations for BaBE algorithm, for each group of data sets where the mean for S=0 is varied.}
    \label{fig:nhanes_EOD}
\end{figure}

\end{document}